\newtheorem{theorem}{Theorem}
\newtheorem{definition}{Definition}
\newcommand{\R}{\mathbb{R}}
\newcommand{\1}{\boldsymbol{1}}
\newcommand{\bb}{\boldsymbol{b}}
\newcommand{\bx}{\boldsymbol{x}}
\newcommand{\bv}{\boldsymbol{v}}
\newcommand{\bu}{\boldsymbol{u}}
\newcommand{\bz}{\boldsymbol{z}}
\newcommand{\omegab}{\boldsymbol{\omega}}
\newcommand{\phib}{\boldsymbol{\phi}}
\newcommand{\bX}{\mathbf{X}}
\newcommand{\bW}{\mathbf{W}}
\newcommand{\bD}{\mathbf{D}}
\newcommand{\bL}{\mathbf{L}}
\newcommand{\bZ}{\mathbf{Z}}
\newcommand{\bU}{\mathbf{U}}
\newcommand{\bV}{\mathbf{V}}
\newcommand{\bI}{\mathbf{I}}
\newcommand{\bM}{\mathbf{M}}
\newcommand{\cM}{\mathcal{M}}
\newcommand{\bUh}{\widehat{\bU}}
\newcommand{\cMh}{\widehat{\cM}}
\newcommand{\bPhi}{\mathbf{\Phi}}
\newcommand{\tr}{\textnormal{trace}}
\newcommand{\cC}{\mathcal{C}}
\newcommand{\bdelta}{\boldsymbol{\delta}}
\newcommand{\bphi}{\boldsymbol{\phi}}
\newcommand{\0}{\boldsymbol{0}}
\begin{document}
\title{Scalable Spectral Clustering Using Random Binning Features}


\author{Lingfei Wu}
\affiliation{\institution{IBM Research AI}}
\email{wuli@us.ibm.com}

\author{Pin-Yu Chen}
\affiliation{\institution{IBM Research AI}}
\email{pin-yu.chen@ibm.com}

\author{ Ian En-Hsu Yen}
\affiliation{\institution{Carnegie Mellon University}}
\email{eyan@cs.cmu.edu}

\author{Fangli Xu }
\affiliation{\institution{College of William and Mary}}
\email{fxu02@email.wm.edu}

\author{Yinglong Xia}
\affiliation{\institution{Huawei Research}}
\email{yinglong.xia.2010@ieee.org}

\author{Charu Aggarwal}
\affiliation{\institution{IBM Research AI}}
\email{charu@us.ibm.com}

\renewcommand{\shortauthors}{Lingfei Wu, Pin-Yu Chen, Ian En-Hsu Yen, et al.}

\begin{abstract}
Spectral clustering is one of the most effective clustering approaches that capture hidden cluster structures in the data. However, it does not scale well to large-scale problems due to its quadratic complexity in constructing similarity graphs and computing subsequent eigendecomposition. Although a number of methods have been proposed to accelerate spectral clustering, most of them compromise considerable information loss in the original data for reducing computational bottlenecks. In this paper, we present a novel scalable spectral clustering method using \emph{Random Binning features} (RB) to simultaneously accelerate both similarity graph construction and the eigendecomposition. Specifically, we  implicitly approximate the graph similarity (kernel) matrix by the inner product of a large sparse feature matrix generated by RB. Then we introduce a state-of-the-art SVD solver to effectively compute eigenvectors of this large matrix for spectral clustering. Using these two building blocks, we reduce the computational cost from quadratic to linear in the number of data points while achieving similar accuracy. Our theoretical analysis shows that spectral clustering via RB converges faster to the exact spectral clustering than the standard Random Feature approximation. Extensive experiments on 8 benchmarks show that the proposed method either  outperforms or matches the state-of-the-art methods in both accuracy and runtime. Moreover, our method exhibits linear scalability in both the number of data samples and the number of RB features.
\end{abstract}

\copyrightyear{2018} 
\acmYear{2018} 
\setcopyright{acmlicensed}
\acmConference[KDD '18]{The 24th ACM SIGKDD International Conference on Knowledge Discovery \& Data Mining}{August 19--23, 2018}{London, United Kingdom}
\acmBooktitle{KDD '18: The 24th ACM SIGKDD International Conference on Knowledge Discovery \& Data Mining, August 19--23, 2018, London, United Kingdom}
\acmPrice{15.00}
\acmDOI{10.1145/3219819.3220090}
\acmISBN{978-1-4503-5552-0/18/08}

%
%
\begin{CCSXML}
<ccs2012>
<concept>
<concept_id>10010147.10010178</concept_id>
<concept_desc>Computing methodologies~Artificial intelligence</concept_desc>
<concept_significance>500</concept_significance>
</concept>
<concept>
<concept_id>10010147.10010257</concept_id>
<concept_desc>Computing methodologies~Machine learning</concept_desc>
<concept_significance>500</concept_significance>
</concept>
<concept>
<concept_id>10010147.10010257.10010258.10010260</concept_id>
<concept_desc>Computing methodologies~Unsupervised learning</concept_desc>
<concept_significance>500</concept_significance>
</concept>
<concept>
<concept_id>10010147.10010257.10010258.10010260.10003697</concept_id>
<concept_desc>Computing methodologies~Cluster analysis</concept_desc>
<concept_significance>500</concept_significance>
</concept>
</ccs2012>
\end{CCSXML}

\ccsdesc[500]{Computing methodologies~Artificial intelligence}
\ccsdesc[500]{Computing methodologies~Machine learning}
\ccsdesc[500]{Computing methodologies~Unsupervised learning}
\ccsdesc[500]{Computing methodologies~Cluster analysis}

\keywords{Spectral clustering; Graph Construction; Random Binning Features; Large-Scale Graph; PRIMME}

\maketitle

\section{Introduction}
Clustering is one of the most fundamental problems in machine learning and data mining tasks. In the past two decades, spectral clustering (SC) \cite{shi2000normalized,ng2002spectral,von2007tutorial,chen2017revisiting} has shown great promise for learning hidden cluster structures from data. The superior performance of SC roots in exploiting non-linear pairwise similarity between data instances, while traditional methods like K-means heavily rely on Euclidean geometry and thus place limitations on the shape of the clusters \cite{fowlkes2004spectral,yan2009fast,chen2016phase}. However, SC methods are typically not the first choice for large-scale clustering problems since modern datasets exhibit great challenges in both computation and memory consumption for computing the pairwise similarity matrix $\bW \in \R^{N \times N}$, where $N$ denotes the number of data points. In particular, given a data matrix $\bX \in \R^{N \times d}$ whose underlying data distribution can be represented as  $K$ weakly inter-connected clusters, it requires $O(N^2)$ space to store the matrix and $O(N^2d)$ complexity to compute $\bW$, and at least takes $O(KN^2)$ or $O(N^3)$ complexity to compute $K$ eigenvectors of the corresponding graph Laplacian matrix $\bL$, depending on whether an iterative or a direct eigensolver is used. To accelerate SC, many efforts have been devoted to address the following computational bottlenecks: 1) pairwise similarity graph construction of $\bW$ from the raw data $\bX$, and 2) eigendecomposition of  the graph Laplacian matrix $\bL$. 

A number of methods have been proposed to accelerate the eigendecomposition, e.g.,  randomized sketching and power method \cite{gittens2013approximate,lin2010power},  sequential reduction algorithm toward an early-stop strategy \cite{chen2006fast,liu2007fast}, and graph filtering of random signals \cite{tremblay2016compressive}. However,  these approaches only partially reduce the computation cost of the eigendecomposition, since the construction of similarity graph matrix $\bW$ still requires quadratic complexity for both computation and memory consumption. 

Another family of research is the use of Landmarks or representatives to jointly improve the computation efficiency of the similarity matrix $\bW$ and the eigendecomposition of $\bL$. One strategy is performing random sampling or K-means on the dataset to select a small number of representative data points and then employing SC on the reduced dataset \cite{yan2009fast,shinnou2008spectral}. Another strategy \cite{sakai2009fast,chen2011large,liu2013large,li2016scalable} is approximating the similarity matrix $\bW$ by a low rank affinity matrix $\bZ \in \R^{N \times R}$, which is computed via either random projection or a bipartite graph between all data points and selected anchor points \cite{liu2010large}. Furthermore, a heuristic that only selects a few nearest anchor points has been applied to build a KNN-based sparse graph similarity matrix. Despite promising results in accelerating SC, these approaches disregard considerable information in the raw data and may lead to degrading clustering performance. More importantly, there is no guarantee that these heuristic methods can approach the results of standard SC.

Another line of research \cite{fowlkes2004spectral,chitta2012efficient,chitta2011approximate,wu2018d2ke,wu2018random} focuses on leveraging various kernel approximation techniques such as Nystr{\"o}m  \cite{williams2001using}, Random Fourier \cite{rahimi2008random,wu2016revisiting,chen2016efficient}, and random sampling to accelerate similarity matrix construction and the eigendecomposition at the same time. The pairwise similarity (kernel) matrix $\bW$ (a weighted fully-connected graph) is then directly approximated by an inner product of the feature matrix $\bZ$ computed from the raw data. Although a KNN-based graph construction allows efficient sparse representation, pairwise method takes full advantage of more complete information in the data and takes into account the long-range connections \cite{fowlkes2004spectral,chen2011parallel}. The drawback of pairwise methods is the high computational costs in requiring the similarity between every pair of data points. Fortunately, we present an approximation technique applicable to SC that significantly alleviates this computational bottleneck. As our work focuses on enabling the scalability of SC using RB, the case of robust spectral clustering on noisy data, such as \cite{bojchevski2017robust}, 
could be applied but is beyond the scope of this paper.

In this paper, inspired by recent advances in the fields of kernel approximation and numerical linear algebra \cite{rahimi2008random,wu2016revisiting,wu2015preconditioned,wu2017primme_svds,chen2018incremental}, we present a scalable spectral clustering method and theoretic analysis to circumvent the two computational bottlenecks of SC in large-scale datasets. We highlight the following main contributions:

\begin{enumerate}
\item  We present for the first time the use of \emph{Random Binning features} (RB) \cite{rahimi2008random,wu2016revisiting} for scaling up the graph construction of similarity matrix in SC, which is implicitly approximated by the inner product of the RB sparse feature matrix $\bZ \in \R^{N \times D}$ derived from the raw dataset, where each row has $nnz(\bZ(i,:)) = R$. To this end, we reduce the computational complexity of the pairwise graph construction from $O(N^2d)$ to $O(NRd)$ and memory consumption from $O(N^2)$ to $NR$. 

\item We further show how to make full use of state-of-the-art near-optimal eigensolver PRIMME \cite{stathopoulos2010primme,wu2017primme_svds} to  efficiently compute the eigenvectors of the corresponding graph Laplacian $\bL$ without explicit formulation. As a result, the computational complexity of the eigendecomposition is reduced from $O(KN^2m)$ to $O(KNRm)$, where $m$ is the number of iterations of the underlying eigensolver.  

\item We extend existing analysis of random features to SC from the optimization perspective \cite{wu2016revisiting}, showing a number {$R = \Omega(1/(\kappa \epsilon))$} of RB features suffices for the uniform convergence to $\epsilon$ precision of the exact SC.

\item In our extensive experiments on 8 benchmark datasets evaluated by 4 different clustering metrics, the proposed technique either outperforms or matches the state-of-the-art methods in both accuracy and computational time. 

\item We corroborate the scalability of our method under two cases: varied number of data samples $N$ and varied number of RB features $R$. In both cases, our method exhibits linear scalability with respect to $N$ or $R$.

\end{enumerate}

\section{Spectral Clustering and Random Binning}
We briefly introduce the SC algorithms and then illustrate RB, an important building block to our proposed method. Here are some notations we will use throughout the paper.

\subsection{Spectral Clustering}
Given a set of $N$ data points $[\bx_1, \ldots, \bx_N] = \bX^{N \times d}$, with  $\bx_i \in \R^d$, the SC algorithm constructs a similarity matrix $\bW \in \R^{N \times N}$ representing an affinity matrix $G = (V,E)$, where the node $\bv_i \in V$ represents the data point $\bx_i$ and the edge $E_{ij}$ represents the similarity between  $\bx_i$ and $\bx_j$. 

The goal of SC is to use $\bW$ to partition $\bx_1, \ldots, \bx_N$ into $K$ clusters. There are several variants of SC. Without lose of generality, we consider the widely used \emph{normalized spectral clustering} \cite{ng2002spectral}. To fully utilize complete similarity information, we consider a fully connected graph instead of a KNN-based graph for SC. An example similarity (kernel) function is the Gaussian Kernel:
\begin{equation}
    k(\bx_i,\bx_j) = \text{exp}\Big(-\frac{\|\bx_i - \bx_j\|^2}{2\sigma^2}\Big)
\end{equation}
where $\sigma$ is the bandwidth parameter. The normalized graph Laplacian matrix $\bL$ is defined as: 
\begin{equation}
    \bL = \bD^{-1/2}(\bD-\bW)\bD^{-1/2} = \bI - \bD^{-1/2}\bW\bD^{-1/2}
\end{equation}
where $\bD\in\R^{N \times N}$ is the degree matrix with each diagonal element $\bD_{ii} = \sum_{j=1}^N \bW_{ij}$. The objective function of normalized SC can be defined as \cite{shi2000normalized}:
\begin{align}
\label{eq:sc_trace_min}
    \min_{\bU \in \R^{N \times K},\bU^T \bU=\bI}~\tr (\bU^T \bL \bU),
\end{align}
where $\tr(\cdot)$ denotes the matrix trace,
$\bI$ denotes the identity matrix, and the constraint $\bU^T \bU=\bI$ implies orthonormality on the columns of $\bU$. We further denote $\bU^*\in \R^{N \times K}$ as the optimizer of the minimization problem in (\ref{eq:sc_trace_min}), where the columns of $\bU^*$ are the $K$ smallest eigenvectors of $\bL$. Finally, the K-means method is applied on the rows of $\bU^*$ to obtain the clusters. The high computational costs of the similarity matrix $O(N^2d)$ and the eigendecomposition $O(KN^2)$ make SC   hardly scalable to large-scale problems. 

\subsection{Random Binning Features}
RB features are first introduced in \cite{rahimi2008random} and rediscovered in \cite{wu2016revisiting} to yield a faster convergence compared to other \emph{Random Features} methods for scaling up large-scale kernel machine. It considers a feature map of the form
\begin{equation}\label{RB_feature_map}
k(\bx_1,\bx_2)=\int_{\omegab} p(\omegab) \phib_{B_{\omegab}}(\bx_1)^T\phib_{B_{\omegab}}(\bx_2) \;d\omegab
\end{equation}
where a set of bins $B_{\omegab}$ defines a random grid that are determined by $\omegab=(\omega_1,u_1,...,\omega_d,u_d)$ drawn from a distribution $p(\omegab)$, and $(\omega_i,u_i)$ represents \emph{width} and \emph{bias} in the $i$-th dimension of a grid. Then for any bin $\bb \in B_{\omegab}$, the feature vector $\phib_{B_{\omegab}}(\bx)$ has
$$
\phib_{\bb}(\bx_i)=1, \;\textit{if}\; \bb = (\lfloor \frac{\bx_{i}^{(1)}-u_1}{\omega_1}\rfloor, ..., \lfloor \frac{\bx_{i}^{(d)}-u_d}{\omega_d}\rfloor)
$$ 
if the data point $\bx_i$ locates in the bin $\bb \in B_{\omegab}$. Since a data point can only locate in one bin, $\phib_{\bb}(\bx_i)=0$ for any other bins. 
Note for each grid $B_{\omegab}$ the number of bins $|B_{\omegab}|$ is countably infinite, therefore $\bphi_{B_{\omegab}}(\bx)$ has infinite dimensions but only $1$ non-zero entry (at the bin $\bx$ lies in). 
Figure \ref{fig:RB_gen} illustrates an RB example when the data dimension $d=2$.
When two data points $\bx_1$, $\bx_2$ fall in the same bin, the \emph{collision probability} for this to happen is proportional to the kernel value $k(\bx_1,\bx_2)$. Note that for a given grid multiple non-empty bins (features) can be produced and thus RB essentially generates a large sparse binary matrix (for more details, please refer to \cite{wu2016revisiting}). 

In practice, in order to obtain a good kernel approximation matrix $\bZ$, a simple Monte Carlo method is often leveraged to approximate \eqref{RB_feature_map} by averaging over $R$ grids $\{B_{\omegab_i}\}_{i=1}^R$, where each grid's parameter $\omegab_i$ is drawn from $p(\omegab)$. Algorithm \ref{alg:RB} summarizes the procedure for generating a number $R$ of RB features from the original dataset $\bX$. The resulting feature matrix $\bZ \in \R^{N \times D}$, where $D$ is determined jointly by both the number of grids $R$ and the kernel parameter $\sigma$. However, for each row the number of nonzero entries $nnz(\bZ(i,:)) = R$ and thus the total number of $nnz(\bZ) = NR$, which is the same as other random feature methods \cite{wu2016revisiting}.

\begin{figure}
    \centering
    \includegraphics[scale=0.28]{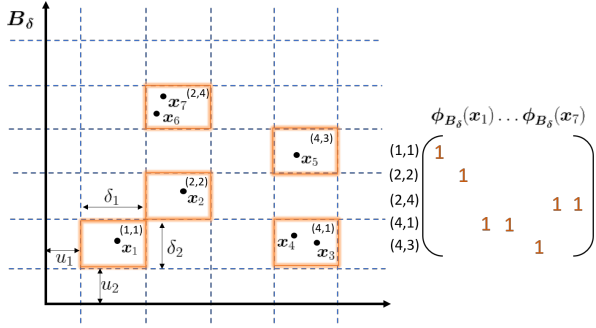}
    \caption{Example of generating  RB features when $d=2$.}
    \label{fig:RB_gen}
\end{figure}

\begin{algorithm}[t]
    \caption{ RB Features Generation}
    \label{alg:RB}
    \begin{algorithmic}[1]
    \STATEx {\bf Input:}  Given a kernel function $k(\bx_i,\bx_j)=\prod_{l=1}^d k_l(|x_{i}^{(l)}-x_{j}^{(l)}|)$. Let $p_j(\omegab) \propto \omegab k_j''(\omegab)$ be a distribution over $\omegab$.
    \STATEx {\bf Output:} RB feature matrix $\bZ^{N \times D}$ for raw data $\bX$
    \FOR {$j = 1, \ldots, R$}
        \STATE Draw $\omega_{i}$ from $p_j(\omegab)$ and $u_{i}\in [0,\omega_{i}]$, for $\forall i\in[d]$ 
        \STATE Compute feature values $\bz_{j}(\bx_i)$ as the indicator vector of bin index $(\lfloor\frac{\bx_{i}^{(1)}-u_1}{\omega_1}\rfloor, ..., 
        \lfloor\frac{\bx_{i}^{(d)}-u_d}{\omega_d}\rfloor )$, for $\forall i\in[N]$.
    \ENDFOR
    \STATE $\bZ_{i,:}=\frac{1}{\sqrt{R}}[\bz_1(\bx_i);...;\bz_D(\bx_i)]$, for $\forall i\in[N]$
    \end{algorithmic}
\end{algorithm}

\section{Scalable Spectral Clustering Using RB Features} \label{sec: scalable sc_rb}
In this section, we introduce our proposed scalable SC method, called SC\_RB, based on RB and a state-of-the-art sparse eigensolver (PRIMME) to effectively tackle the two computational bottlenecks: 1) Pairwise graph construction; and 2) Eigendecomposition of the graph Laplacian matrix.

\subsection{Pairwise graph construction}
The first step of SC is to build a similarity graph. A fully connected graph entailing complete similarity information of the original data offers great flexibility in the underlying kernel function to define the affinities between data points. However, constructing a pairwise graph essentially computes a similarity (kernel) matrix between each pair of samples, which is computationally intensive due to its quadratic complexity $O(N^2d)$. Thus, we propose to use RB features $\bZ$ to approximate the pairwise kernel matrix $\bW$, resulting in the following approximate spectral clustering objective:
\begin{equation}\label{eq:sc_rb}
\begin{aligned}
\min_{\bU \in \R^{N \times K},\bU^T \bU=\bI}~\tr(\bU^T\widehat{\bL}\bU)
\end{aligned}
\end{equation}
where 
$$
\widehat{\bL}:=\bI-\widehat{\bD}^{-1/2}\widehat{\bW}\widehat{\bD}^{-1/2}=\bI-\widehat{\bD}^{-1/2}\bZ\bZ^T\widehat{\bD}^{-1/2}
$$
and $\bZ$ is a large sparse $N\times D$ matrix generated from Algorithm \ref{alg:RB}. To apply RB to SC, it is necessary to compute the degree matrix $\widehat{\bD}$, the row sum of $\widehat{\bW}$. Luckily, we can compute it without explicitly computing $\bZ\bZ^T$ since
\begin{equation}\label{eq:sc_rb_computeD}
    \widehat{\bD} = \textnormal{diag}(\widehat{\bW}\1) = \textnormal{diag}(\bZ (\bZ^T \1))
\end{equation} 
where $\textnormal{diag}(\bx)$ is a diagonal matrix with the vector $\bx$ on its main diagonal, and $\1$ represents a column vector of ones. Therefore, we can simply compute $\widehat{\bD}$ by two matrix-vector multiplication without the explicit form of $\widehat{\bW}$. With $\widehat{\bD}$, we define $\widehat{\bZ} = \widehat{\bD}^{-1/2}\bZ$ and thus we approximate the graph Laplacian matrix with $\widehat{\bL} = \bI -  \widehat{\bZ}\widehat{\bZ}^T$ in linear complexity.  

\subsection{Effective eigendecomposition using PRIMME}
After constructing the pairwise graph implicitly using $\bZ$, we compute the largest left singular vectors of $\widehat{\bZ}$, which is equivalent to computing the smallest eigenvectors of $\widehat{\bL} := \bI - \widehat{\bZ}\widehat{\bZ}^T$ in Equation \eqref{eq:sc_rb}, satisfying 
\begin{equation}
\widehat{\bZ}\bv_i = \sigma_i \bu_i, \quad \quad i = 1, \ldots, K, \ \ K \ll N ,
\end{equation}
where the singular values are labeled in descending order, $\sigma_1 \geq \sigma_2 \geq \ldots \geq \sigma_K$. The matrices $ \bU = [\bu_{1},\ldots , \bu_{K}] \in \mathbb{R}^{N \times K}$ and $ \bV = [\bv_{1},\ldots , \bv_{n}] \in \mathbb{R}^{D \times K}$ are the left and right singular vectors respectively, where $\bU$ is the low-dimensional embedding associated with $K$ clusters. 

However, $\widehat{\bZ}$, a weighted RB feature matrix of $\bZ$, is a very large sparse matrix of the size $N \times D$, making it a challenging task for any standard SVD solver. Specifically, one has to resort to a powerful iterative sparser eigensolver that is capable to handle two difficulties for large-scale matrix: 1) slow convergence of eigenvalues when the eigenvalue gaps are not well separated, a common case when $N$ is large; 2) low memory footprint yet near-optimal convergence for seeking a small number of eigenpairs. 

To overcome these challenges, we leverage current state-of-the-art eigenvalue and SVD solver \cite{stathopoulos2010primme,wu2017primme_svds}, named PReconditioned Iterative MultiMethod Eigensolver (PRIMME). It implements two near-optimal eigenmethods GD+K and JDQMR that are default methods for seeking a small portion of extreme eigenpairs under limited memory. Unlike Lanczos methods (like Matlab svds function), these eigenmethods are in the classes of Generalized Davidson, which enjoy benefits for advanced subspace restarting and preconditioning techniques to accelerate the convergence.   

Once the left singular vectors $\bU$ are obtained, following  \cite{ng2002spectral}, we obtain $\widehat{\bU}$ by normalizing each row of $\bU$ to unit norm. Then K-means method is applied to the rows of $\widehat{\bU}$ to obtain the final $K$ clusters and the binary membership matrix $\bM \in \{0,1\}^{N \times K} $. 

\begin{algorithm}[t]
    \caption{ Scalable SC method based on RB}
    \label{alg:sc_rb}
    \begin{algorithmic}[1]
    \STATEx {\bf Input:}  Data matrix $\bX$,  number of clusters $K$, number of girds $R$, kernel parameter $\sigma$.
    \STATEx {\bf Output:} K clusters and membership matrix $\bM$
    \STATE Construct a fully connected graph using a sparse feature matrix $\bZ \in \R^{N \times D}$ generated by RB using Algorithm \ref{alg:RB}.
    \STATE Compute degree matrix $\widehat{\bD}$ using Equation \ref{eq:sc_rb_computeD} and obtain $\widehat{\bZ} = \widehat{\bD}^{-1/2}\bZ$ using Equation \ref{eq:sc_rb}.
    \STATE Compute $K$ largest left singular vectors $\bU$ of $\widehat{\bZ}$ using state-of-the-art iterative sparse SVD solver  (e.g., PRIMME).
    \STATE Obtain the matrix $\widehat{\bU}$ from $\bU$ by row normalization.
    \STATE Cluster the rows of $\widehat{\bU}$ into $K$ clusters using K-means and obtain the corresponding membership matrix $\bM$.
    \end{algorithmic}
\end{algorithm}

\textbf{Computation analysis.} Algorithm \ref{alg:sc_rb} summarizes the procedure for the proposed scalable SC method based on RB and PRIMME. Using these two important building blocks, the computational complexity has been substantially reduced from $O(N^2d)$ to $O(NRd + NR)$ for computing the feature matrix $\widehat{\bZ}$ from RB in pairwise graph construction, and from at least $O(KN^2m)$ to $O(KNRm)$ for the subsequent SVD computation, where $m$ is the number of iterations of the underlying SVD solver. At the same time, the memory consumption has been reduced from $O(N^2)$ to $O(NR)$. In addition to these two key steps, the final K-means also takes $O(NK^2t)$, where $t$ is the number of iterations of K-means. Therefore, the total computational complexity and memory consumption are $O(NRd + NKRm + NK^2t)$ and $O(NR)$. The linear complexity in the number $N$ of data points render SC scalable to large-scale problems.

\section{Theoretical Analysis}
The convergence of Random Feature approximation has been studied since it was first proposed in \cite{rahimi2008random}, where a sampling approximation analysis was employed to show the convergence of the approximation to exact kernel. Such analysis was adopted by most of its follow-up works. More recently, a new approach of analysis based on infinite-dimensional optimization is proposed in \cite{yen2014sparse}, which achieves a faster convergence rate than the previous approach, and was employed further in \cite{wu2016revisiting} to explain the superior convergence of \emph{Random Binning Features} than other types of random features in the context of classification.

Here we further adapt the analysis in \cite{wu2016revisiting} to study the convergence of Spectral Clustering under RB approximation. We first recall the well-known connection between \emph{SC} and \emph{kernel $K$-means} \cite{dhillon2004kernel}, stating the equivalence of \eqref{eq:sc_trace_min} to the following objective
\begin{equation}\label{kernel_K-means}
\begin{aligned}
\min_{\bU \in \mathbb{R}^{N \times K}:\bU^T\bU=\bI}\min_{\cM} && \frac{1}{2N} \sum_{i=1}^N \|\bphi(\bx_i)-\cM \bU_{i,:}^T\|^2
\end{aligned}
\end{equation}
where $\bphi(.)$ is a possibly infinite-dimensional feature map in \eqref{RB_feature_map} from the normalized kernel, $\cM$ is the matrix of \emph{means} with $k$ columns, each of which has the same dimension to the feature map $\bphi(.)$. Dropping constants that are neither related to $\bU$ nor related to $\cM$, the objective \eqref{kernel_K-means} becomes
\begin{equation}\label{kernel_output}
f(\bU,\cM):=\frac{1}{N}\sum_{i=1}^N -\langle \bphi(\bx_i),\cM \bU_{i,:}^T\rangle + \frac{1}{2N}\|\cM\bU_{i,:}\|^2.
\end{equation}
Let $(\bUh,\cMh)$ be the clustering from the RB approximation:
\begin{equation}\label{alg_output}
(\bUh,\cMh):=\underset{\bU,\cM}{\arg \min}\;\frac{1}{N}\sum_{i=1}^N -\langle \bz(\bx_i),\cM \bU_{i,:}^T\rangle + \frac{1}{2N}\|\cM\bU_{i,:}\|^2
\end{equation}
Let $(\bU^*,\cM^*)$ be the exact minimizer of \eqref{kernel_output}. Our goal is to show that 
\begin{equation}\label{result}
f(\bUh,\cMh)\leq f(\bU^*,\cM^*) + \epsilon
\end{equation}
as long as the number of Random Binning grids satisfies $R=\Omega(\frac{1}{\kappa\epsilon})$, where $\kappa$ is an estimate of the number of non-empty bins per grid. The quantity $\kappa$ is crucial in the our analysis, as under the same computational budget, RB generates $\kappa$ more features in expectation and converges \emph{$\kappa$-times faster} than other types of random features. Note the computational cost is not $\kappa$-times more because of the sparse structure of RB---only one of $\kappa$ features is non-zero for each sample $\bx$. The formal definition of $\kappa$ is as follows.

\begin{definition}\label{def:collision_prob}
Define the collision probability of data $\bX$ on bin $b\in B_{\bdelta}$ as: 
\begin{equation}\label{collision_prob}
     \nu_{b}:=\frac{ |\{n\in[N] \;|\; \phi_{b}=1 \}| }{N}.
\end{equation}
Let $\nu_{\bdelta} := \max_{b\in B_{\bdelta}} \nu_b$ be an upper bound on \eqref{collision_prob}, and $\kappa_{\bdelta}:=1/\nu_{\bdelta}$ be a lower bound on the number of non-empty bins of grid $B_{\bdelta}$. Then
\begin{equation}\label{expected_col_prob}
    \kappa:= E_{\bdelta}[\kappa_{\bdelta}] \geq 1
\end{equation}
is the expected number of non-empty bins.
\end{definition}

The proof of \eqref{result} contains two parts. In the first part, we show that $f(\bU,\cMh)\leq f(\bU,\cM^*)+\epsilon$ for any given $\bU$. This is obtained from the insight that the RB approximation $\bz(\bx)$ (from Algorithm \ref{alg:RB}) is a subset of coordinates from the feature map $\bphi(\bx)$. Therefore, $\cMh$ can be interpreted as a solution obtained from $R$ iterations of \emph{Randomized Block Coordinate Descent} on $f(\bU,\cM)$ w.r.t. $\cM$, which results in $R$ non-zero blocks of rows in $\cM$.

\begin{theorem}\label{thm:RBconverge}
Let $R$ be the number of grids generated by Algorithm \ref{alg:RB}. For any given $\bU$, let $\cM^*$ and $\cMh$ be the minimizers of \eqref{kernel_K-means} and \eqref{alg_output} respectively. We have
\begin{equation}\label{eq:RBconvergence}
E[f(\bU,\cMh)] - f(\bU,\cM^*) \leq \frac{\|\cM^*\|_F^2}{\kappa (R-c)}
\end{equation}
for $R>c$, where $c$ is a small constant.
\end{theorem}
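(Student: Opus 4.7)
My strategy is to recognize the RB minimizer $\cMh$ as the iterate produced by $R$ rounds of randomized block coordinate descent (RBCD) on the exact, infinite-dimensional objective $f(\bU,\cdot)$ viewed as a quadratic over the Hilbert space $\mathcal{H}\otimes\mathbb{R}^K$ indexed by (grid, bin) pairs, and then apply the standard BCD convergence rate for separable strongly convex quadratics. First, I would put $f$ in a clean quadratic form. Writing $\bu_i := \bU_{i,:}^T$ and using the orthonormality $\bU^T\bU = \bI$, one obtains $\sum_i \|\cM\bu_i\|^2 = \|\cM\|_F^2$, so $f(\bU,\cM) = -\langle Q, \cM\rangle + \tfrac{1}{2N}\|\cM\|_F^2$, where $Q := \tfrac{1}{N}\sum_i \bphi(\bx_i)\otimes\bu_i$. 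This is $1/N$-strongly convex with unique minimizer $\cM^* = N Q$ and sub-optimality gap $f(\bU,\cM) - f(\bU,\cM^*) = \tfrac{1}{2N}\|\cM-\cM^*\|_F^2$. Moreover, the integral representation \eqref{RB_feature_map} decomposes $\cM$ into per-grid blocks $\{\cM_{\omegab}\}$ and makes $f$ separable across them, so each block admits a closed-form optimizer.

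Second, I would show that the RB problem \eqref{alg_output} is precisely the restriction of $f(\bU,\cdot)$ to the subspace spanned by the $R$ sampled grid blocks $B_{\omegab_1},\dots,B_{\omegab_R}$, once the $1/\sqrt{R}$ rescaling built into $\bz(\bx)$ by Algorithm~\ref{alg:RB} is absorbed into the change of measure between the per-grid and averaged inner products. Because $f$ is separable in the grid direction, its minimizer $\cMh$ then coincides with the iterate obtained by $R$ exact block updates of $f(\bU,\cdot)$ on the blocks $\omegab_1,\dots,\omegab_R$ drawn i.i.d.\ from $p(\omegab)$ and initialized at $\cM^{(0)} = 0$.

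Third, I would invoke the RBCD rate for separable strongly convex quadratics, adapting the arguments of \cite{yen2014sparse,wu2016revisiting}. The expected per-step decrease in $f$ scales with the effective ``block size''---the number of non-empty bins that a single sampled grid activates on $\bX$---whose expectation is exactly $\kappa$ by Definition~\ref{def:collision_prob}. Telescoping over $R$ rounds, with a small constant $c$ absorbing the slack from the first few iterations, yields the claimed $E[f(\bU,\cMh)] - f(\bU,\cM^*) \leq \|\cM^*\|_F^2/(\kappa(R-c))$.

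The hard part will be Step 2. Because $p(\omegab)$ is a continuous density, any individual grid has measure zero, so the identification between ``minimizing \eqref{alg_output} over the sampled subspace'' and ``performing $R$ exact block updates on $f$ in the full Hilbert space'' must be carried out measure-theoretically, with the $1/\sqrt{R}$ normalization in Algorithm~\ref{alg:RB} precisely offsetting the missing $p(\omegab)$ weight in the sample-based inner product. Once this correspondence is in place, the $\kappa$-factor in the convergence rate drops out of the block-size parameter in the RBCD bound, which is the structural reason why RB approximates the kernel faster than coordinate-at-a-time random feature schemes.
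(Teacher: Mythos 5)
Your proposal follows essentially the same route as the paper: both use the orthonormality of $\bU$ to reduce $f(\bU,\cdot)$ to a strongly convex quadratic in $\cM$, and both rest on interpreting $\cMh$ as the outcome of $R$ rounds of randomized block coordinate descent over sampled grid blocks, with $\kappa$ entering as the expected number of non-empty bins activated per block. The only structural difference is that the paper additionally splits the quadratic into $K$ independent column-wise subproblems $g(\bU_{:,k},\cM_{:,k})$ precisely so that each matches the scalar empirical-loss form of Theorem 1 of \cite{wu2016revisiting}, which it then invokes as a black box (yielding $c=\max_{k} c_k$); your Steps 2--3 instead propose to re-derive that RBCD rate directly for the matrix-valued problem, and the measure-theoretic identification you correctly flag as the hard part is exactly the content of the cited theorem, so you should either perform the column/block reduction and cite it, or be prepared to carry out that argument in full.
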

\begin{proof}
Let $\bPhi:=[\bphi(\bx_1),...,\bphi(\bx_N)]$. Given $\bU$ that satisfies $\bU^T\bU=\bI$, the objective $f(\bU,\cM)$ can be written as
\begin{align*}
f(\bU,\cM)&=\frac{-1}{N}\langle\bPhi,\cM\bU^T\rangle + \frac{1}{2N} \tr(\cM\bU^T\bU\cM^T)\\
&= \frac{-1}{N}\langle \bPhi \bU ,\cM\rangle + \frac{1}{2N}\|\cM\|_F^2\\
&=\sum_{k=1}^K g(\bU_{:,k},\cM_{:,k}),
\end{align*}
where $g(\bU_{:,k},\cM_{:,k})$ is defined as
\begin{equation}\label{ERM}
\frac{1}{N}\sum_{i=1}^N -\langle \bphi(\bx_i)u_{ik},\cM_{:,k}\rangle+\frac{1}{2N}\|M_{:,k}\|^2.
\end{equation}
In other words, given $\bU$, $f(\bU,\cM)$ can be separated as $K$ independent subproblems, each solving a column of $\cM$. Let the first term of \eqref{ERM} be the loss function and the second term be the regularizer. Then \eqref{ERM} satisfies the form of a convex, smooth empirical loss minimization problem studied in \cite{wu2016revisiting}. By Theorem 1 of \cite{wu2016revisiting}, the minimizer of \eqref{ERM} satisfies
\begin{equation}\label{result_per_k}
E[g(\bU_{:,k},\cMh_{:,k})]-g(\bU_{:,k},\cM_{:,k}^*) \leq \frac{\|\cM^*_{:,k}\|^2}{\kappa(R-c_k)}
\end{equation}
with $c_k:=\lceil\frac{2\kappa( g(\bU_{:,k},\0)-g(\bU_{:,k},\cM^*_{:,k}))}{\|\cM_{:,k}^*\|^2}\rceil$. Summing \eqref{result_per_k} over $k=1...K$, we have
$$
E[f(\bU,\cMh)] - f(\bU,\cM^*) \leq \frac{\|\cM^*\|_F^2}{\kappa (R-c)}
$$
where $c:=\max_{k=1}^K c_k$.
\end{proof}

Theorem \ref{thm:RBconverge} implies that $f(\bU,\cMh)\leq f(\bU,\cM^*)+\epsilon$ for
\begin{equation}\label{tmp3}
R_{RB}\geq \frac{\|\cM^*\|_F^2}{\kappa\epsilon}+c.
\end{equation}
As noted by the earlier work \cite{wu2016revisiting}, this convergence rate is $\kappa$ times faster than that of other Random Features under the same analysis framework. More specifically, if applying Theorem 2 of \cite{yen2014sparse} instead of Theorem 1 of \cite{wu2016revisiting} in the proof of Theorem \ref{thm:RBconverge}, one would have obtained a $\kappa$-times slower convergence rate for a general Random Feature method that generates a single feature at a time, which requires
$$
R_{RF}\geq \frac{\|\cM^*\|_F^2}{\epsilon}+c'
$$
number of features to guarantee an $\epsilon$ suboptimality. This is owing to RB's ability to generate a block of $\kappa$ expected number of features at a time. 

In the second part of the proof, we show that the spectral clustering $\bUh$ obtained from the RB approximation converges to $\bU^*$ in the objective.

\begin{theorem}\label{thm:SCconverge}
Let $R$ be the number of grids generated by Algorithm \ref{alg:RB}, and let $\bU^*$, $\bUh$ be the spectral clusterings obtained from \eqref{kernel_K-means}, \eqref{alg_output} respectively. We have
\begin{equation}\label{eq:RBconverge}
E[f(\bUh,\cMh)] - f(\bU^*,\cM^*) \leq \epsilon
\end{equation}
for
$$
R\geq \frac{\|\cM^*\|_F^2}{\kappa\epsilon}+c
$$
where $c$ is a small constant (defined in Theorem \ref{thm:RBconverge}).
\end{theorem}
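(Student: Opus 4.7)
The plan is to reduce Theorem~\ref{thm:SCconverge} to Theorem~\ref{thm:RBconverge} applied at the \emph{deterministic} anchor point $\bU^*$, exploiting only the joint optimality of $(\bUh,\cMh)$ in the RB problem. Write $\hat{f}(\bU,\cM)$ for the RB-approximated objective in~\eqref{alg_output} and, for each $\bU$ satisfying $\bU^{T}\bU=\bI$, let $\cMh(\bU):=\arg\min_{\cM}\hat{f}(\bU,\cM)$; by construction $\cMh=\cMh(\bUh)$.

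First, since $(\bUh,\cMh)$ is the joint minimizer of $\hat{f}$ over the feasible set and $(\bU^*,\cMh(\bU^*))$ is a feasible competitor, the deterministic one-liner
\begin{equation*}
\hat{f}(\bUh,\cMh)\;\le\;\hat{f}(\bU^*,\cMh(\bU^*))
\end{equation*}
follows. Next I would invoke the identification that is already the engine behind the proof of Theorem~\ref{thm:RBconverge}: since $\bz(\bx)$ is, up to the $1/\sqrt{R}$ normalization in Algorithm~\ref{alg:RB}, a sub-coordinate of the infinite-dimensional $\bphi(\bx)$, the finite-dimensional $\cMh(\bU)$ embeds naturally into a finitely supported element of the infinite-dimensional feature space with $\hat{f}(\bU,\cMh(\bU))=f(\bU,\cMh(\bU))$ as scalars. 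Applying this identity at $\bU=\bUh$ and at $\bU=\bU^*$ upgrades the previous display to
\begin{equation*}
f(\bUh,\cMh)\;\le\;f(\bU^*,\cMh(\bU^*)).
\end{equation*}

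Then Theorem~\ref{thm:RBconverge} applied at the \emph{deterministic} $\bU^*$ (for which $\cM^*(\bU^*)=\cM^*$) yields
\begin{equation*}
E\bigl[f(\bU^*,\cMh(\bU^*))\bigr]-f(\bU^*,\cM^*)\;\le\;\frac{\|\cM^*\|_F^2}{\kappa(R-c)}.
\end{equation*}
Taking expectation in the second display and chaining the two estimates gives $E[f(\bUh,\cMh)]-f(\bU^*,\cM^*)\le\|\cM^*\|_F^2/(\kappa(R-c))$; forcing this to be at most $\epsilon$ recovers the stated condition $R\ge\|\cM^*\|_F^2/(\kappa\epsilon)+c$.

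The delicate step is the identification $\hat{f}(\bU,\cMh(\bU))=f(\bU,\cMh(\bU))$: one must verify that embedding the $D$-dimensional $\cMh$ as a finitely supported element of the infinite-dimensional feature space preserves both the cross term $\langle\bphi(\bx_i),\cM\bU_{i,:}^{T}\rangle$ and the quadratic $\|\cM\bU_{i,:}\|^2$ appearing in~\eqref{kernel_output}. This requires the $1/\sqrt{R}$ scaling of Algorithm~\ref{alg:RB} together with the sampling density $p(\omegab)$ in~\eqref{RB_feature_map} to conspire correctly---exactly the Randomized Block Coordinate Descent viewpoint that already powers Theorem~\ref{thm:RBconverge}. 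Once that bookkeeping is in place, the remainder of the argument is a routine chain of inequalities, with no additional probabilistic concentration needed since $\bU^*$ is fixed.
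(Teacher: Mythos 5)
Your proposal is correct and follows essentially the same route as the paper: the deterministic inequality $f(\bUh,\cMh(\bUh))\le f(\bU^*,\cMh(\bU^*))$ from joint optimality under the approximate feature map, followed by an application of Theorem~\ref{thm:RBconverge} at the fixed $\bU^*$ and a chaining of the two bounds. The only difference is that you make explicit the identification $\hat{f}(\bU,\cMh(\bU))=f(\bU,\cMh(\bU))$ via the sub-coordinate embedding, which the paper leaves implicit when it states \eqref{thm2_tmp1} directly in terms of $f$; that bookkeeping is indeed the same block-coordinate-descent viewpoint already underlying Theorem~\ref{thm:RBconverge}, so no new content is needed.
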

\begin{proof}
Note the problem \eqref{alg_output} can be solved with global optimal guarantee by finding minimum eigenvalues and eigenvectors of \eqref{eq:sc_rb}. Therefore, let $\cMh(\bUh)$, $\cMh(\bU^*)$ be the minimizers of \eqref{alg_output} under $\bUh$, $\bU^*$ respectively. We have
\begin{equation}\label{thm2_tmp1}
f(\bUh,\cMh(\bUh)) \leq f(\bU^*,\cMh(\bU^*))
\end{equation}
by the optimality of $(\bUh,\cMh(\bUh))$ under the approximate feature map $\bz(\bx)$. In addition, from Theorem \ref{thm:RBconverge} we have
\begin{equation}\label{thm2_tmp2}
f(\bU^*,\cMh(\bU^*))-f(\bU^*,\cM^*)\leq \frac{\|\cM^*\|_F^2}{\kappa (R-c)}
\end{equation}
for $R>c$. Combining \eqref{thm2_tmp1} and \eqref{thm2_tmp2} leads to the result.
\end{proof}

\section{Experiments}
We conduct experiments to demonstrate the effectiveness and efficiency of the proposed method, and compare against 8 baselines on 8 benchmarks. Our code \footnote{https://github.com/IBM/SpectralClustering\_RandomBinning} is implemented in Matlab and we use C Mex functions for computationally expensive components of RB \footnote{https://github.com/teddylfwu/RandomBinning} and of PRIMME eigensolver \footnote{https://github.com/primme/primme}. All computations are carried out on a linux machine with Intel Xeon CPU at 3.3GHz for a total of 16 cores and 500 GB main memory. 
\begin{table}[htbp]
\centering
\caption{Properties of the datasets.} 
\vspace{0mm}
\label{tb: info of datasets}
\begin{center}
    \begin{tabular}{ c c c c}
    \hline
    Name 		 & $K$: Classes & $d$: Features & $N$: Samples \\ \hline 
    pendigits 	 & 10  & 16  & 10,992 \\
    letter       & 26 & 16 & 15,500 \\
    mnist        & 10 & 780 & 70,000  \\ 
    acoustic     & 3  & 50 & 98,528  \\ 
    ijcnn1 		 & 2  & 22  & 126,701 \\ 
    cod\_rna     & 2  & 8  & 321,054  \\
    covtype-mult & 7  & 54 & 581,012 \\ 
    poker        & 10 & 10 & 1,025,010 \\ \hline
    \end{tabular}
\end{center}
\end{table}

\textbf{Datasets.} As shown in Table 1, we choose 8 datasets from LibSVM \cite{chang2011libsvm}, where 5 of them overlap with the datasets used in \cite{yan2009fast,li2016scalable,chen2011large}. We summarize them as follows:

1) \textbf{pendigits.} A collection of handwritten digit data set consisting of 250 samples from 44 writers where sampled coordination information are used to generate 16 features per sample;

2) \textbf{letter.} A collection of images for 26 capital letters in the English alphabet where 16 character image features are generated; 

3) \textbf{minst.} A popular collection of handwritten digit data set distributed by Yann LeCun, where each image is represented by a 784 dimensional vector; 

4) \textbf{acoustic.} A collection of time-series data from various sensors in the moving vehicles for measuring the acoustic modality where a 50 dimensional feature vector is generated by using FFT for each time-series; 

5) \textbf{ijcnn1.} A collection of time-series data from IJCNN 2001 Challenge, where 22 attributes are generated as a feature vector; 

6) \textbf{cod\_rna.} A collection of non-coding RNA sequences, where the total 8 features are generated by counting the frequencies of 'A', 'U', 'C' of sequences 1 and 2 as well as the length of the shorter sequence and deltaG\_total value;

7) \textbf{covtype-mult.} A collection of samples for predicting the forest cover type from cartographic variables, where the total 54 feature vector is generated for representing a sample;

8) \textbf{poker.} A collection of poker record samples where each hand consisting of five playing cards drawn from a standard deck of 52 generates a feature vector of 10 attributes.

\begin{table*}[t]
\centering
\caption{Average rank scores comparing SC\_RB against others methods using $R=1024$.}
\vspace{0mm}
\label{tb:ave_rank_alldata}
\newcommand{\Bd}[1]{\textbf{#1}}
\begin{center}
    \begin{tabular}{ c c c c c c c c c c}
    \hline
    Dataset & K-means & SC & KK\_RS & KK\_RF & SV\_RF & SC\_LSC & SC\_Nys & SC\_RF & SC\_RB \\ \hline 
    pendigits  & 3.00 & 4.75 & 2.00 & 7.75 & 8.5 & \Bd{1.00} & 4.75 & 7.25 & 5.00 \\ 
    letter	 & 8.50 & 5.75 & 5.50 & 7.50 & 4.75 & 3.25 & 4.75 & 3.75 & \Bd{1.25} \\ 
    mnist	& 5.00 & 4.25 & 5.00 & 9.00 & 8.00 & \Bd{1.00} & 3.25 & 6.75 & 2.75 \\ 
    acoustic  & 4.75 & -- & 4.25 & 6.25 & 5.75 & 3.50  & 4.75 & 5.75 & \Bd{1.00} \\ 
    ijcnn1  & 4.50 & -- & 5.75 & 2.00 & 4.00 & 6.75  & 4.75 & 7.25 & \Bd{1.00} \\ 
    cod\_rna  & 5.75 & -- & 3.50 & 5.00 & 7.75 & 5.50 & 4.00 & 2.75 & \Bd{1.75} \\ 
    covtype-mult  & 3.75 & -- & 5.25 & 5.75 & 6.50 & 2.50 & 4.75 & 5.75 & \Bd{1.75} \\ 
    poker  & 4.33 & -- & 4.00 & \Bd{3.33} & 4.67 & 5.67 & 5.00 & 4.33 & 4.67  \\ \hline
    \end{tabular}
\end{center}
\end{table*}

\begin{table*}[t]
\centering
\caption{Computational time (seconds) comparing SC\_RB against others methods using $R=1024$.}
\vspace{0mm}
\label{tb:runtime_alldata}
\newcommand{\Bd}[1]{\textbf{#1}}
\begin{center}
    \begin{tabular}{ c c c c c c c c c c}
    \hline
    Dataset & K-means & SC & KK\_RS & KK\_RF & SV\_RF & SC\_LSC & SC\_Nys & SC\_RF & SC\_RB \\ \hline 
    pendigits  & 0.8 & 25.0 & 10.7 & 10.4 & 1.0 & 7.6 & 2.5 & 1.4 & 1.8 \\ 
    letter	 & 5.9 & 171.4 & 17.1 & 36.9 & 8.9 & 27.1 & 14.6 & 10.0 & 7.7 \\ 
    mnist	& 278.1 & 2661 & 79.1 & 312.4 & 22.6 & 25.5 & 31.0 & 20.5 & 25.9 \\ 
    acoustic  & 10.2 & -- & 34.7 & 83.7 & 6.3 & 16.7  & 20.1 & 7.0 & 10.7 \\ 
    ijcnn1  & 4.2 & -- & 44.2 & 89.6 & 5.1 & 9.9  & 18.5 & 5.5 & 34.7 \\ 
    cod\_rna  & 6.7 & -- & 88.2 & 190.0 & 8.6 & 8.9 & 46.8 & 13.0 & 24.2 \\ 
    covtype-mult  & 60.7 & -- & 180.2 & 220.0 & 40.5 & 181.1 & 99.1 & 41.5 & 1593 \\ 
    poker  & 102.4 & -- & 363.1 & 5812 & 254.5 & 337.4 & 340.6 & 293.3 & 538.4  \\ \hline
    \end{tabular}
\end{center}
\end{table*}

\textbf{Baselines.} We compare against 8 random feature based SC or approximation SC methods:

1) \textbf{SC\_Nys} \cite{fowlkes2004spectral}: a fast SC method based on Nystr{\"o}m method; 

2) \textbf{SC\_LSC} \cite{chen2011large}: approximate SC for KNN-based bipartite graph between raw data and anchor points selected by K-means;

3) \textbf{SV\_RF} \cite{chitta2012efficient}: fast kernel K-means using singular vectors of the RF feature matrix (approximating similarity matrix $\bW$); 

4) \textbf{SC\_RF}: we modify SV\_RF method to become a fast SC method based on RF feature matrix (approximating Laplacian matrix $\bL$); 

5) \textbf{KK\_RF} \cite{chitta2012efficient}: another kernel K-means approximation method directly using the RF feature matrix; 

6) \textbf{KK\_RS} \cite{chitta2011approximate}: an approximate Kernel K-means by a random sampling approach; 

7) \textbf{SC} \cite{ng2002spectral}: Exact SC method; 

8) \textbf{K-means} \cite{hartigan1979algorithm}: standard K-means method applied on original dataset. 

\textbf{Evaluation metrics.} We use 4 commonly used clustering metrics for cluster quality evaluation, which has been advocated and discussed in \cite{zaki2014data}.
Let  $\{\cC_k\}_{k=1}^K$ and $\{\cC^\prime_k\}_{k=1}^{K}$ denote the $K$  clusters found by a clustering algorithm and the true cluster labels, respectively. The considered clustering metrics are: 

1) \textbf{Normalized mutual information (NMI)}: 
$$	
\textnormal{NMI}=\frac{2 \cdot I(\{\cC_k\},\{\cC^\prime_k\})}{|H(\{\cC_k\})+H(\{\cC^\prime_k\})|},
$$ 
where $I$ is the mutual information between $\{\cC_k\}_{k=1}^K$ and $\{\cC^\prime_k\}_{k=1}^{K}$, and $H$ is the entropy of clusters.

2) \textbf{Rand index (RI)}: 
$$
\textnormal{RI}=\frac{TP+TN}{TP+TN+FP+FN},
$$ 
where $TP$, $TN$, $FP$ and $FN$ represent true positive, true negative, false positive, and false negative decisions, respectively. 

3) \textbf{F-measure (FM)}: 
$$
\textnormal{FM}=\frac{1}{K} \sum_{k=1}^K \textnormal{F-measure}_k,
$$ 
where $	\textnormal{F-measure}_k= \frac{2 \cdot PREC_k \cdot RECALL_k}{PREC_k + RECALL_k}$, and $PREC_k $ and $RECALL_k$ are the precision and recall values for cluster $\cC_k$.

4) \textbf{Accuracy (Acc)}: 
$$
\textnormal{Acc}=\frac{1}{N} \sum_{i=1}^N \delta(\{\cC_k\}_{k=1}^K,\{\cC^\prime_k\}_{k=1}^{K})
$$
where $N$ is the total number of samples and the best mapping function $\delta(x,y)$ is the delta function that equals 1 if $x=y$ and equals 0 otherwise between cluster labels $\{\cC_k\}_{k=1}^K$ and the true labels $\{\cC^\prime_k\}_{k=1}^{K}$ for each sample. 

These metrics are all scaled between 0 and 1, and higher value means better clustering.

\textbf{Average rank score.} To combine multiple clustering metrics for performance evaluation of different SC methods, we adopt the methodology proposed in \cite{yang2015defining} and use the average rank score of all clustering metrics as the final performance metric. Therefore, lower average rank score means better clustering performance.

\textbf{Parameter selection.} We use the RBF kernel for all similarity (Kernel) based methods on all datasets, where the kernel parameter $\sigma$ is obtained through cross-validation within typical range [0.01 100]. All methods use the same kernel parameters to promote a fair comparison. For other parameters, we use the recommended settings in the literature.

\subsection{Clustering accuracy and computational time on all datasets}

\textbf{Setup.} We first compare against 8 aforementioned baselines in terms of both average rank score and computational time. We use the methodology proposed in \cite{yang2015defining} to compute the average ranking score among 4 different metrics NMI, RI, FM, and Acc. Although the rank $R$ has different meanings in each method but similar effects on the performance, we choose $R=1024$ for all methods to promote a fair comparison. In addition, we use PRIMME\_SVDS to accelerate SVD decomposition for all methods except SC\_LSC. For the final step of SC, we use Matlab's internal K-means function with 10 replicates. All methods use same random seeds so the difference caused by randomness is minimized. 

\textbf{Results.} Table \ref{tb:ave_rank_alldata} shows that SC\_RB consistently outperforms or matches state-of-the-art SC methods in terms of average ranking score on 5 out of 8 datasets (except pendigits, mnist, poker). The first highlight in the table is that SC\_LSC has quite good performance in the majority of datasets, especially for pendigits and mnist, owing to the sparse low rank approximation using AnchorGraph technique \cite{liu2010large}. However, we would like to point out that in SC\_LSC the similarity matrix is built on a KNN-based graph, which is essentially different from other SC methods which use a fully connected graph. This explains why SC\_LSC has even better performance than the exact SC method in these two datasets. For poker, all methods have very close numbers in four different metrics, which leads to quite similar average ranking score for all methods. Secondly, the SC type methods such as SC\_Nys, SC\_RF, and SC\_RB generally achieve better ranking scores compared to similarity-based on methods such as KK\_RF and SV\_RF. This is because the SC type methods are built on a Normalized Cuts formulation that yields a better performance guarantee in terms of the graph clustering. Finally, the improved performance of SC\_RB in the majority of datasets stems from the fact that it directly approximate a pairwise similarity matrix, which utilizes all information from the raw data. Its faster convergence allows it to retrieve more information with a similar rank $R$ compared to other methods. 

Table \ref{tb:runtime_alldata} illustrates that SC\_RB can achieve similar computational time to the other methods despite of a very large sparse matrix generated from RB, due to an important factor - near-optimal eigensolver PRIMME.
One should not be surprised that the empirical runtime of various scalable SC methods has relatively large range of differences. It is because that the constant factor in the computation complexity may vary with different datasets but the total computational costs are still bounded by $O(NRd+KNRm+NK^2t)$. This constant factor typically depends on different characteristics of various datasets and specific method. For instance, KK\_RF often needs more computational time than other methods since it needs firstly compute a dense feature matrix $Z$ of $N \times R$ size and applied K-means directly on $Z$. When $R$ is relatively large, the computation of K-means requiring $NRKt$ complexity may start dominating the total complexity, which is observed in the Table \ref{tb:runtime_alldata}. Similarly, the computational time on covtype-mult with SC\_RB is substantially heavier than those of other methods since the eigenvalues of the corresponding Laplacian matrix is very clustered making the number of iterations $m$ much more than the usual (typically 10 - 100 iterations) in other cases. Nevertheless, both complexity analysis and empirical runtime corroborate that SC\_RB is computationally as efficient as other random features based SC methods and approximation Kernel K-means methods in most of cases.

\subsection{Effects of RB on runtime and convergence}

\begin{figure}[!htb]
\centering
	  \begin{subfigure}[b]{0.23\textwidth}
      \includegraphics[width=\textwidth]{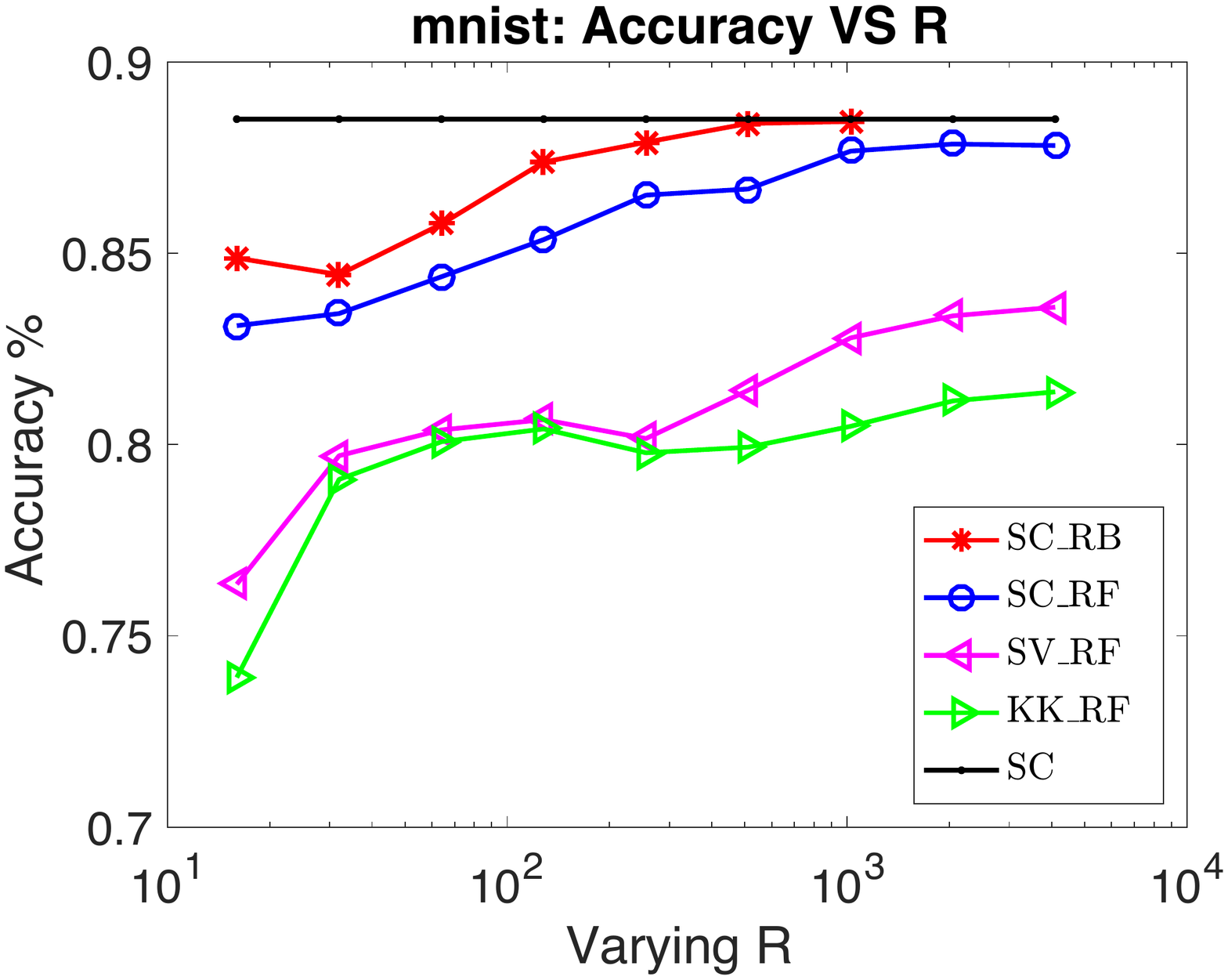}
      \caption{Accuracy (Acc)}
      \label{fig:Accu_varyingR_mnist}
      \end{subfigure}
	  \begin{subfigure}[b]{0.23\textwidth}
      \includegraphics[width=\textwidth]{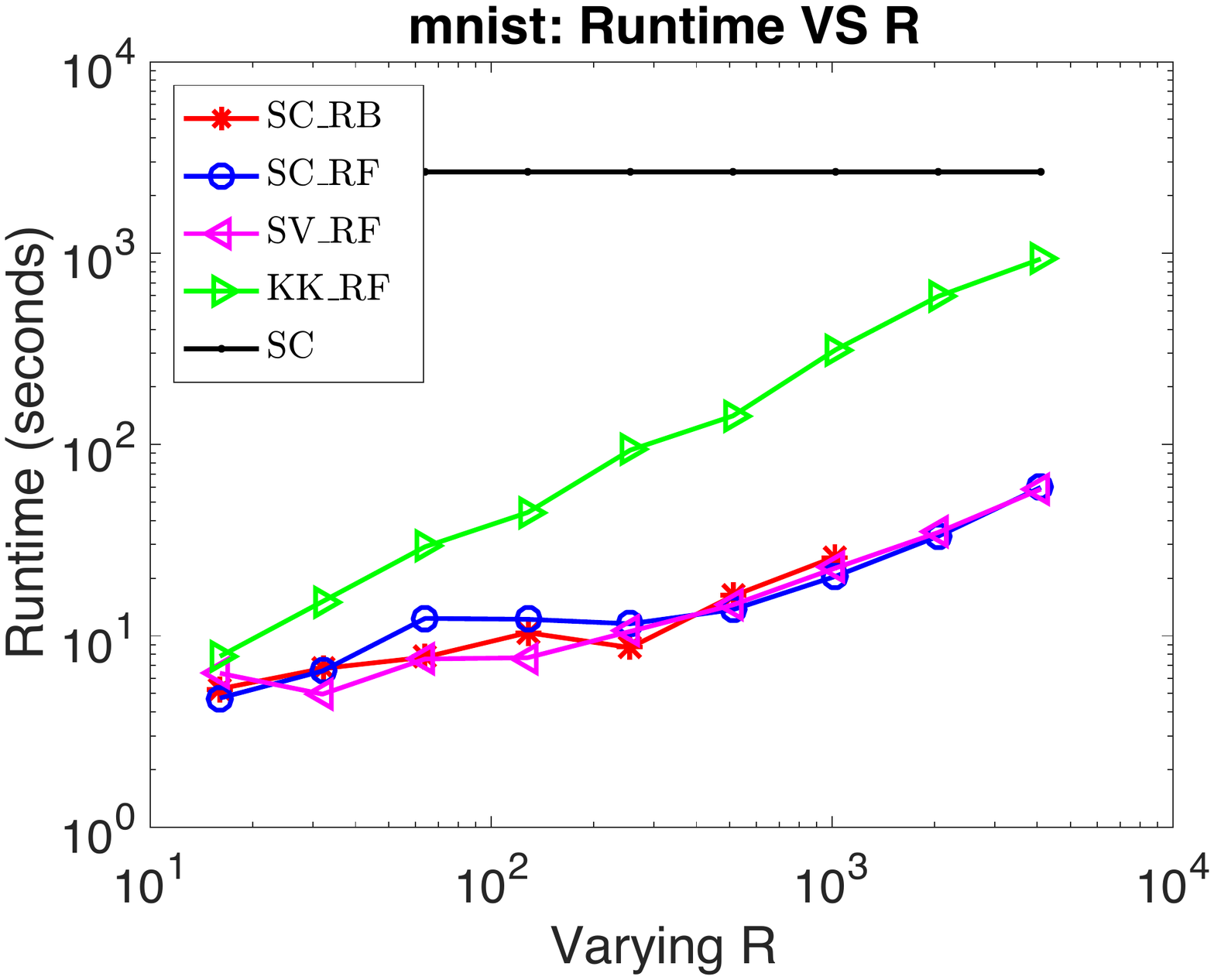}
      \caption{Runtime}
      \label{fig:Runtime_varyingR_mnist}
      \end{subfigure}
\caption{Clustering accuracy and runtime when varying $R$ on mnist for random features based SC methods.}
 \vspace{0mm}
\label{fig:Accu_runtime_varyingR_mnist}
\end{figure}

\begin{figure}[!htb]
\centering
	  \begin{subfigure}[b]{0.23\textwidth}
      \includegraphics[width=\textwidth]{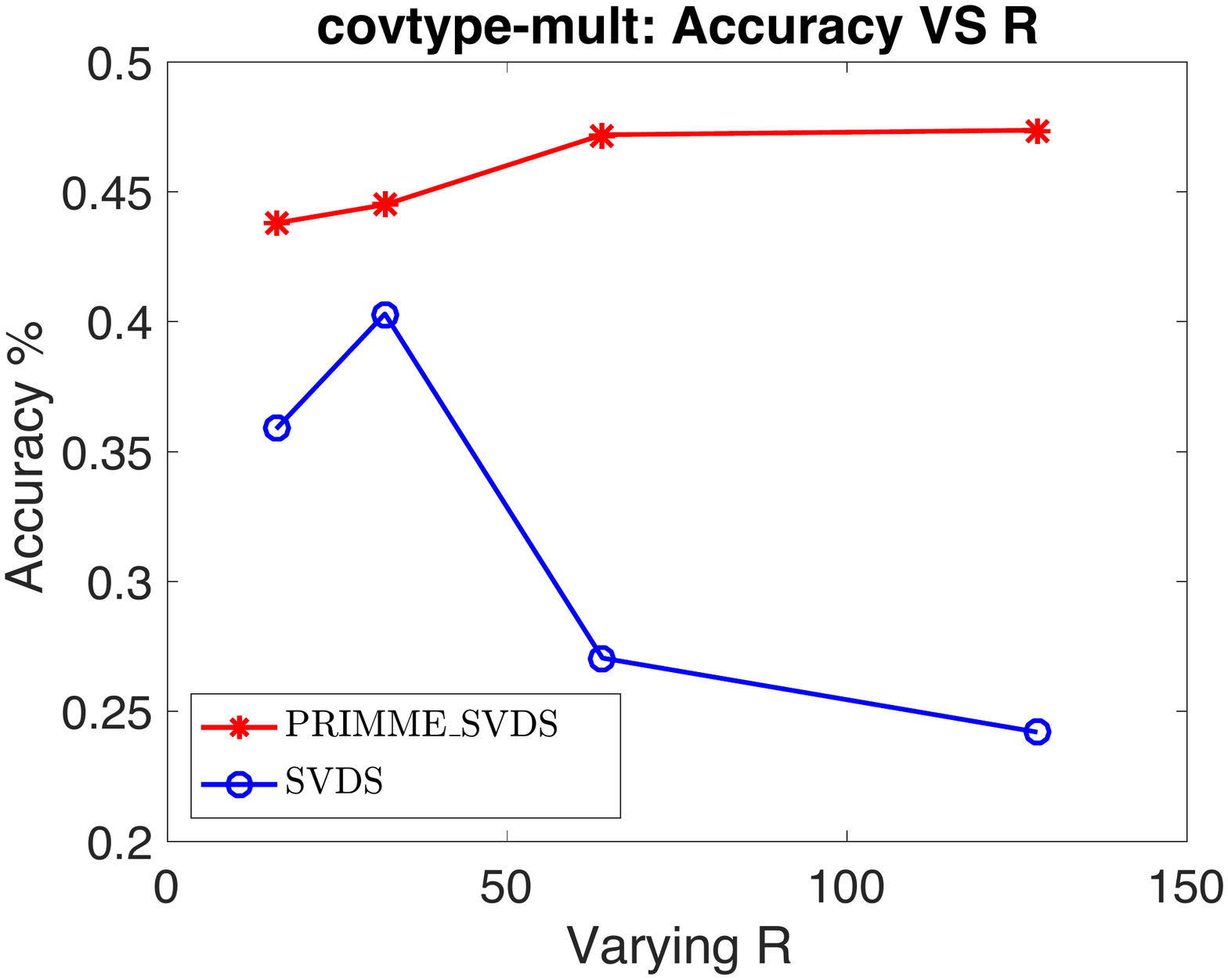}
      \caption{Accuracy (Acc)}
      \label{fig:Accu_varyingR_covtype-mult}
      \end{subfigure}
	  \begin{subfigure}[b]{0.23\textwidth}
      \includegraphics[width=\textwidth]{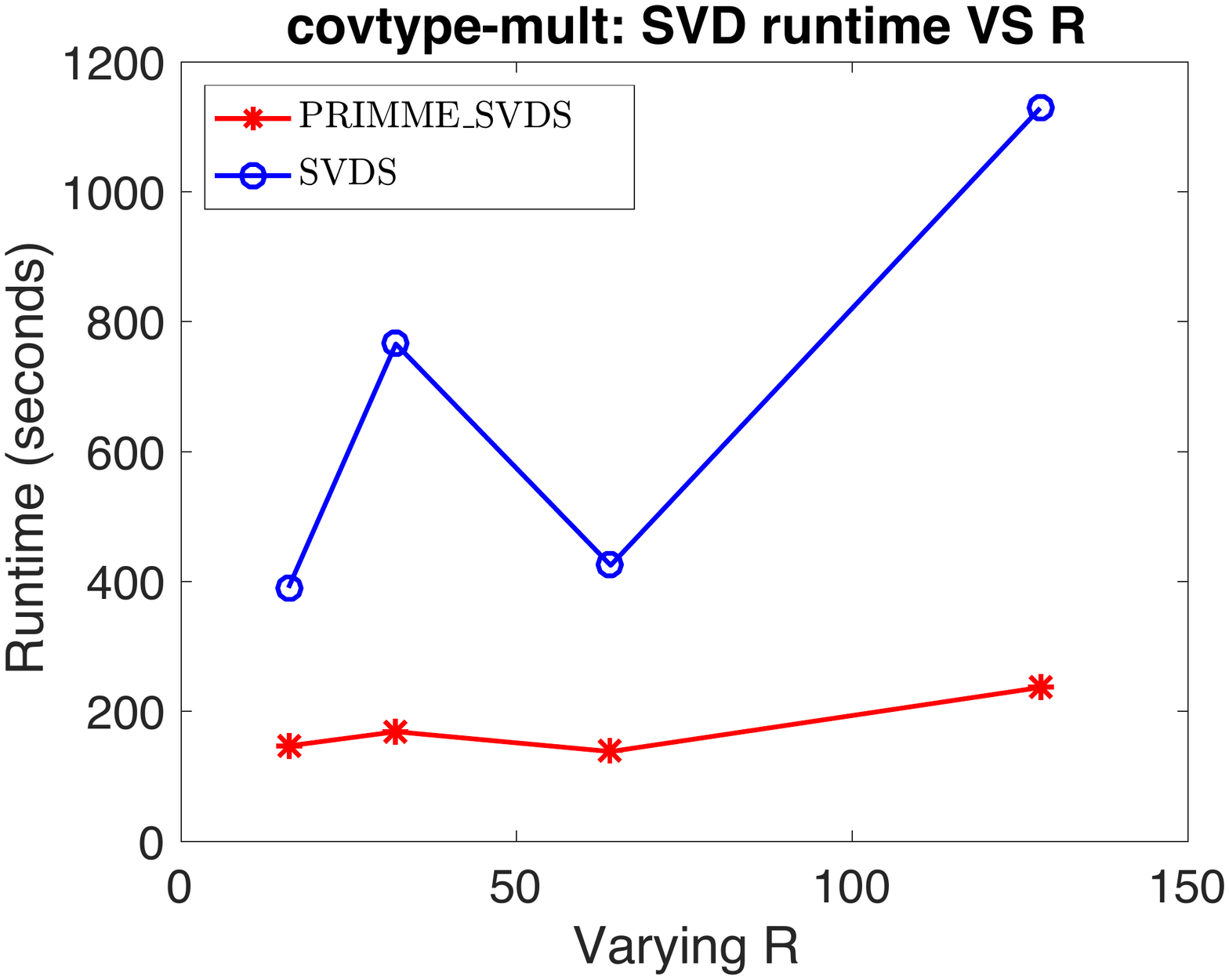}
      \caption{Runtime}
      \label{fig:Runtime_varyingR_covtype-mult}
      \end{subfigure}
\caption{Clustering accuracy and runtime when varying $R$ on covtype-mult using PRIMME\_SVDS and Matlab SVDS.}
 \vspace{0mm}
\label{fig:Accu_runtime_varyingR_covtype-mult}
\end{figure}

\textbf{Setup.} 
The first goal here is to investigate the scalability of SC\_RB over the vanilla SC method in terms of runtime while achieving the similar performance. The second goal is to study the behavior of various scalable SC and approximate Kernel K-means methods based on different random features. We limit our comparisons among two random features (RF and RB) based SC or Kernel K-means type methods.  
We choose the mnist dataset since it has been widely studied for  convergence analysis of approximation in the literature \cite{chitta2012efficient,chen2011large,li2016scalable}. We report runtime and commonly used Accuracy (Acc) as our measurement metric when varying the rank $R$ from 16 to 4096 (except SC\_RB from 16 to 1024).

\textbf{Results.}
We investigate how the performance of different methods changes when the number $R$ of random features (RF and RB) increases from 16 to 4096. 
Fig. \ref{fig:Runtime_varyingR_mnist} illustrates that despite a large sparse feature matrix generated by RB, the computational time of SC\_RB is orders of magnitudes less expensive compared to that of exact SC, and is comparable to other SC methods based on RF features. This is the desired feature of SC\_RB that it can achieve higher accuracy than other efficient SC methods without comprising the computation time. 
As shown in Fig. \ref{fig:Accu_varyingR_mnist}, we can see that the clustering accuracy (Acc) of all methods generally converge to that of exact SC but with different convergence rates. More importantly, SC\_RB yields faster convergence compared to other scalable SC methods based on RF features, which confirms our analysis in Theorem \ref{thm:SCconverge}. For instance, SC\_RB with $R=1024$ has already reached the same accuracy as the exact SC method while SC\_RF converges relatively slower to the exact SC and get close to SC with $R=4096$. Interestingly, SV\_RF and KK\_RF are not competitive in Accuracy, indicating that approximating the graph Laplacian matrix $L$ is more beneficial than these that approximating the similarity matrix $W$ in some cases.

\subsection{Effects of SVD solvers on runtime}

\begin{figure}[!htb]
\centering
	  \begin{subfigure}[b]{0.23\textwidth}
      \includegraphics[width=\textwidth]{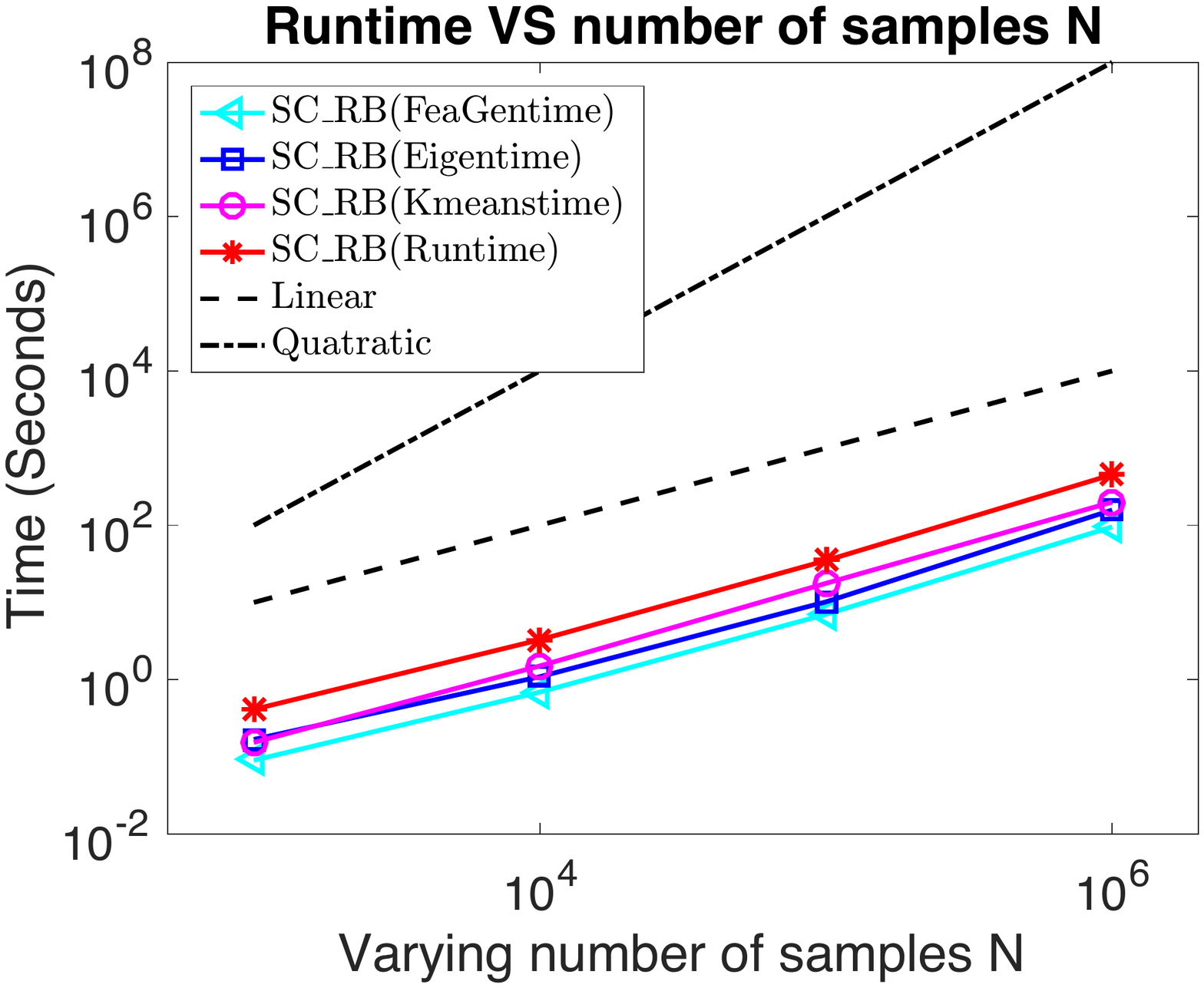}
      \caption{Size of Poker (1M)}
      \label{fig:scalability_varyingN_poker}
      \end{subfigure}
	  \begin{subfigure}[b]{0.23\textwidth}
      \includegraphics[width=\textwidth]{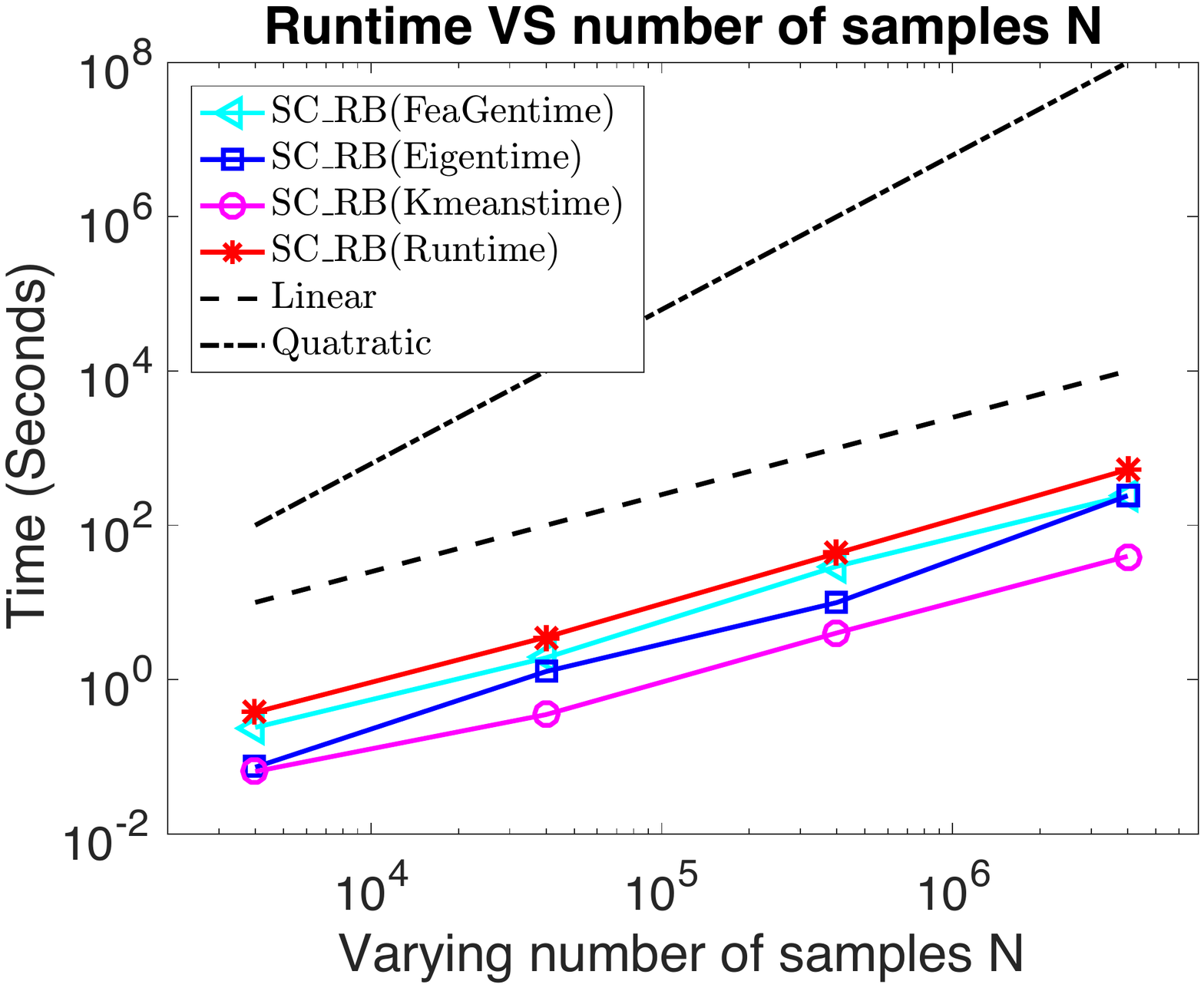}
      \caption{Size of SUSY (4M)}
      \label{fig:scalability_varyingN_susy}
      \end{subfigure}
\caption{Linear scalability of SC\_RB when varying the number of samples $N$. Linear and quadratic complexity are also plotted for easy comparisons.}
\label{fig:scalability_varyingN}
\end{figure}

\begin{figure*}[!htb]
\centering
	  \begin{subfigure}[b]{0.23\textwidth}
      \includegraphics[width=\textwidth]{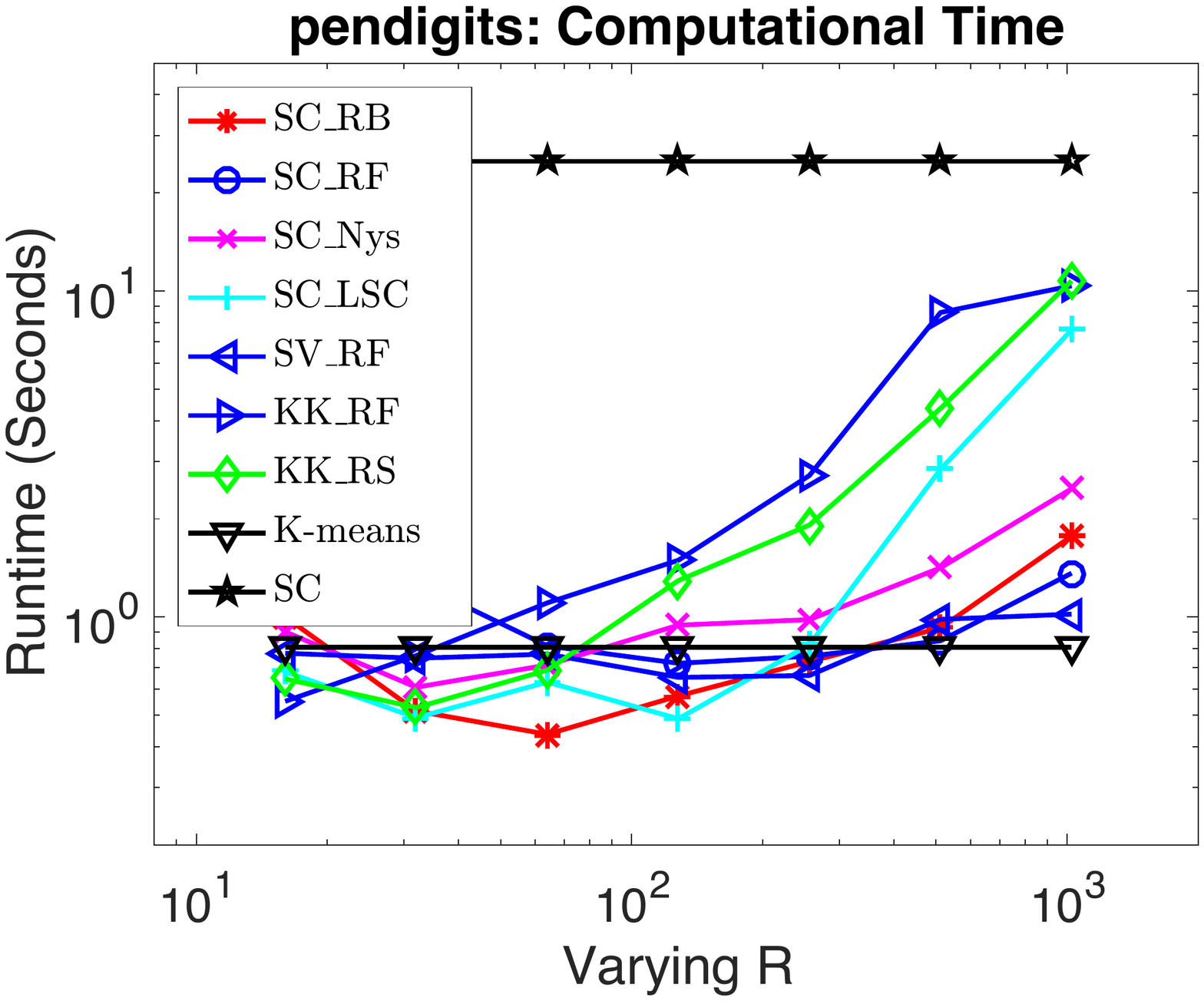}
      \caption{pendigits}
      \label{fig:scalability_varyingR_pendigits}
      \end{subfigure}
	  \begin{subfigure}[b]{0.23\textwidth}
      \includegraphics[width=\textwidth]{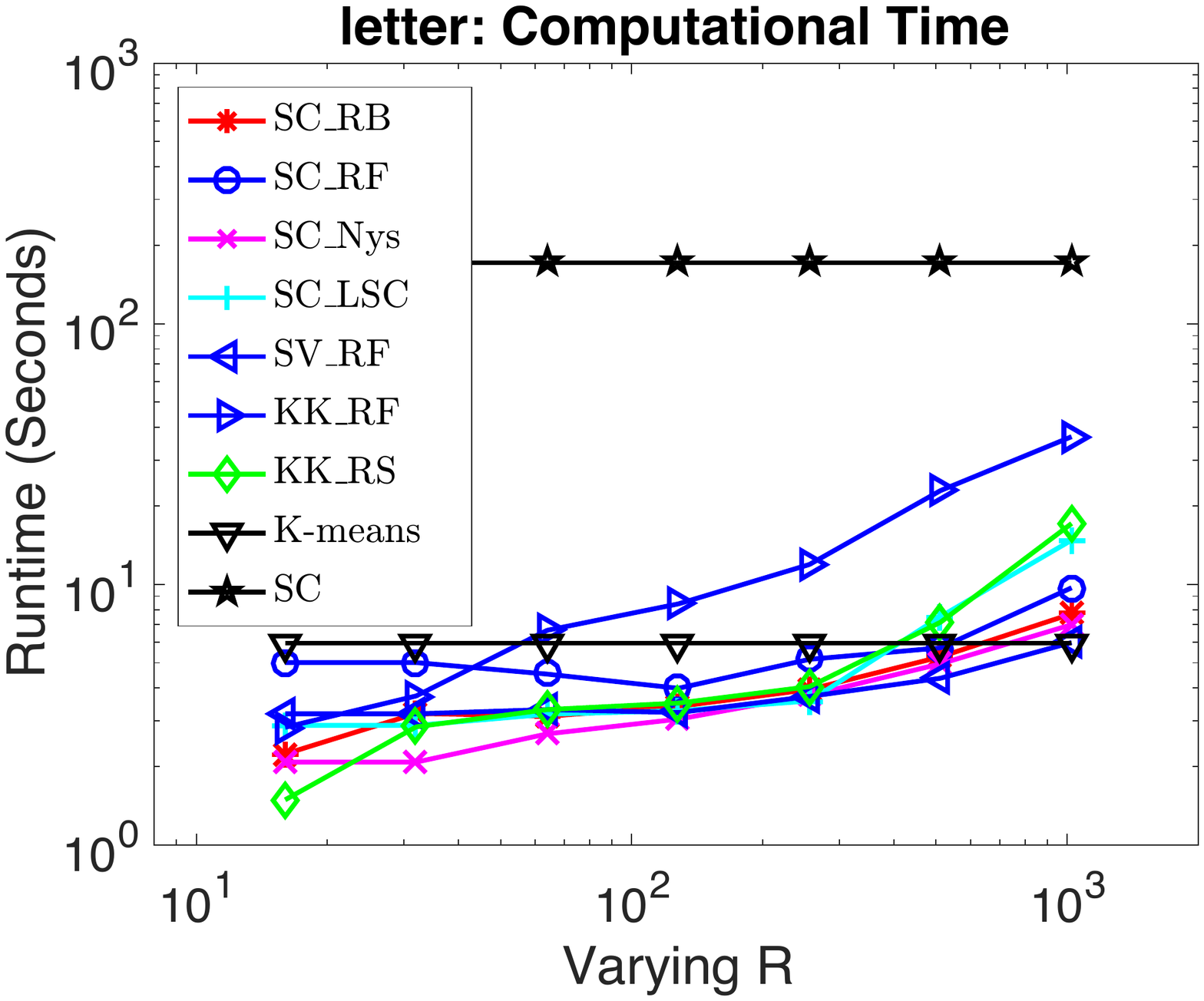}
      \caption{letter}
      \label{fig:scalability_varyingR_letter}
      \end{subfigure}
      \begin{subfigure}[b]{0.23\textwidth}
      \includegraphics[width=\textwidth]{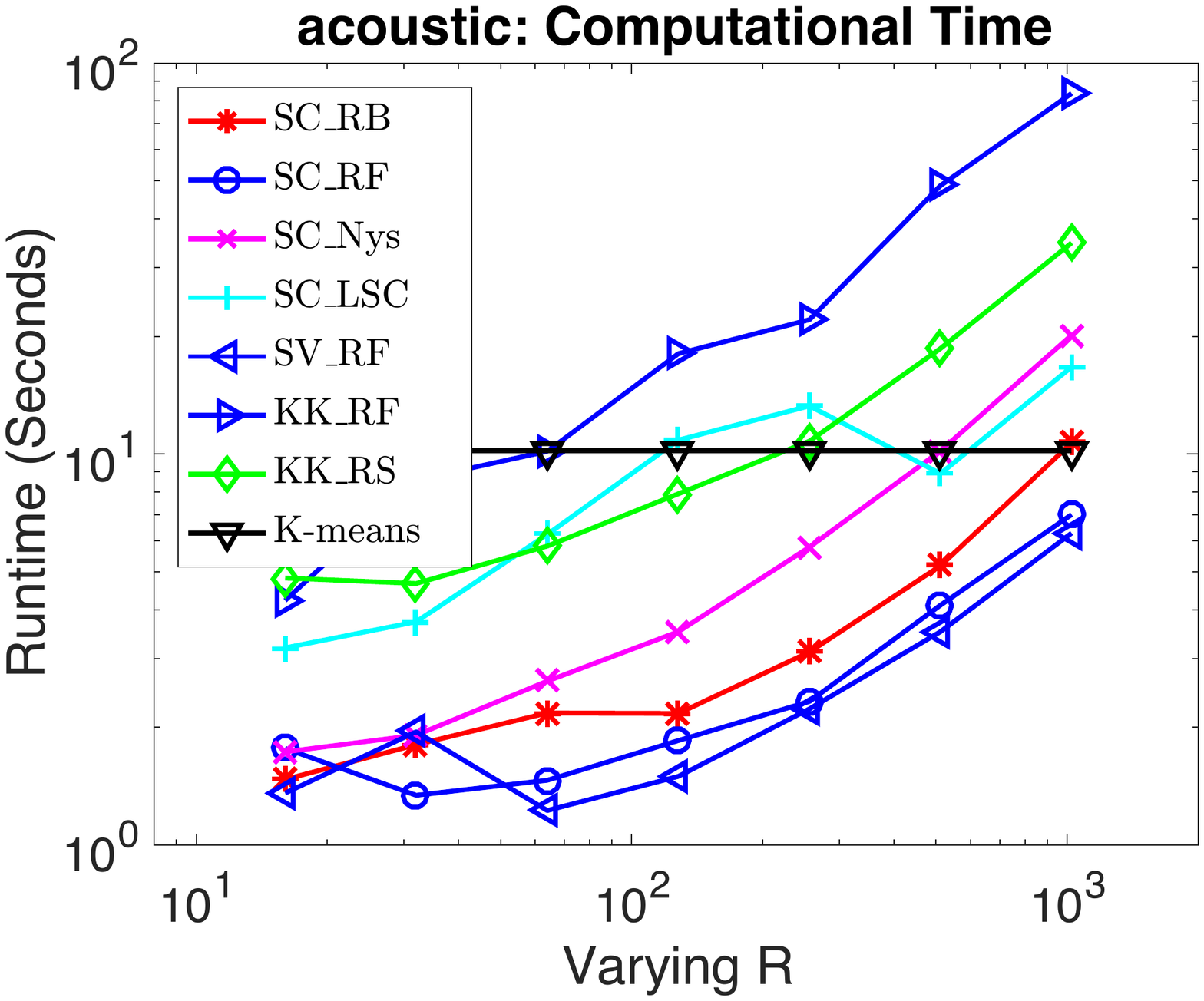}
      \caption{acoustic}
      \label{fig:scalability_varyingR_acoustic}
      \end{subfigure}
       \begin{subfigure}[b]{0.23\textwidth}
      \includegraphics[width=\textwidth]{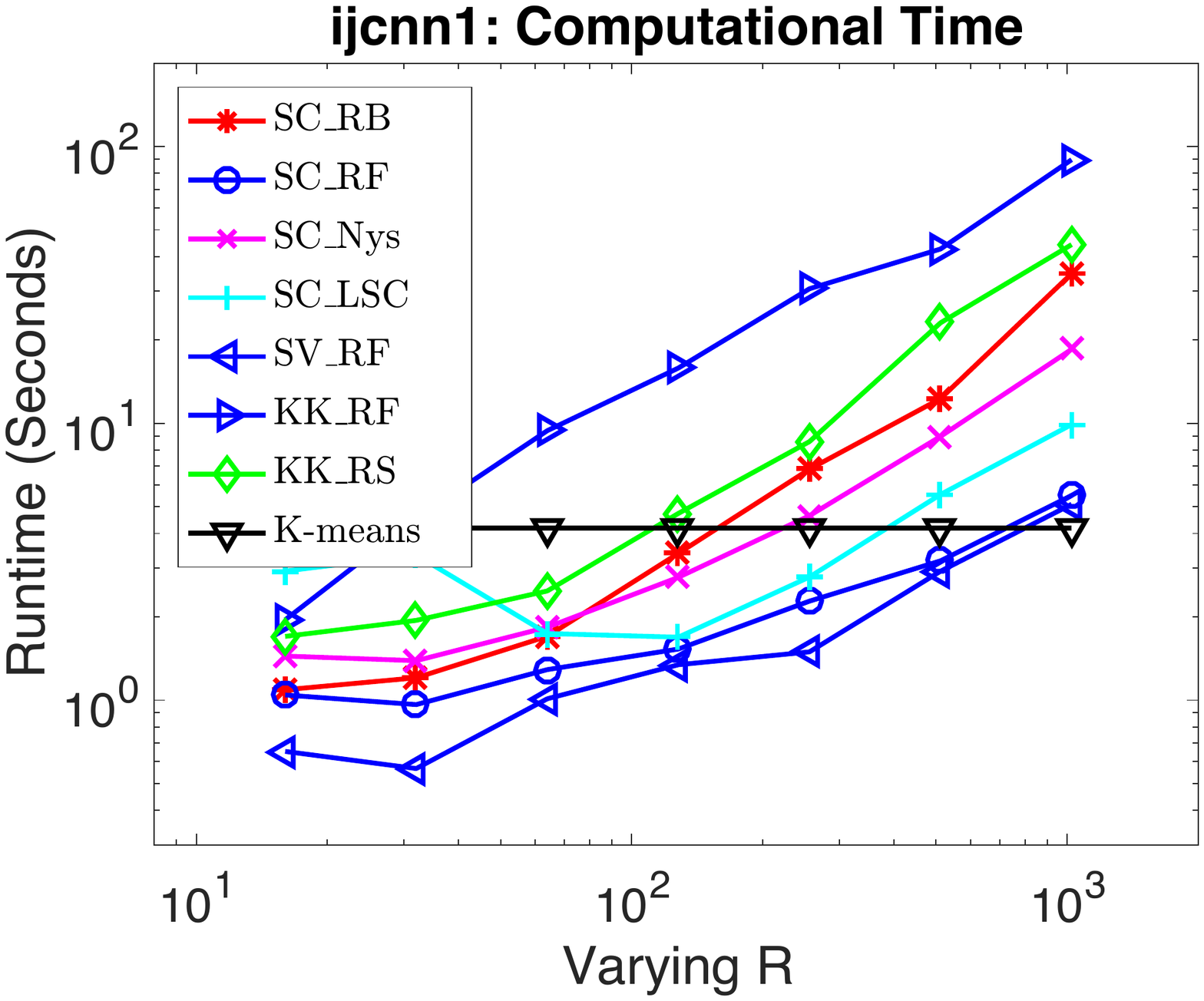}
      \caption{ijcnn1}
      \label{fig:scalability_varyingR_ijcnn1}
      \end{subfigure}
\caption{Scalability of SC\_RB and other methods on 4 datasets when varying the number of latent features $R$.}
\label{fig:scalability_varyingR}
\end{figure*}

\textbf{Setup.} We perform experiments to study the effects of various SVD solvers on runtime for SC\_RB. We choose the covtype-mult dataset due to two reasons: 1) RB generates a very large sparse matrix $\bZ$ having the size of half millions in the number of data points and tens of millions in the number of sparse features, which challenges any existing SVD solver in a single machine; 2) the convergence of iterative eigensolver largely depends on the well-separation of the desired eigenvalues. Unfortunately, the gap between the largest eigenvalues of covtype-mult is very small $O(1E-5)$, making it a difficult eigenvalue problem. We compare PRIMME\_SVDS with Matlab SVDS function, a widely used SVD solver routine in the research community. We also set stopping tolerance \texttt{1E-5} to yield faster convergence for both solvers. We vary $R$ for SC\_RB from 16 to 128 and record the Accuracy (Acc) and Runtime (in Seconds) as our performance metrics for this set of experiment. 

\textbf{Results.} 
Fig. \ref{fig:Accu_runtime_varyingR_covtype-mult} shows how the accuracy (Acc) and runtime changes for SC\_RB using these two different SVD solvers when varying the rank $R$ on covtype-mult dataset. Interestingly, SC\_RB with PRIMME\_SVDS delivers more consistent accuracy than that with SVDS. This may be because Matlab SVDS function has a difficult time to converge to multiple very close singular values, showing an warning message "reach default maximum iterations". However, as shown in Fig. \ref{fig:Runtime_varyingR_covtype-mult}, less accurate singular triplets (from Matlab SVDS) takes significantly more computational time compared to PRIMME\_SVDS, especially when $R$ increases. In contrast, the computational time of eigendecomposition using PRIMME\_SVDS changes slowly with increased $R$. Thanks to the power of PRIMME, the proposed SC\_RB could achieve good clustering performance based on high-quality singular vectors while managing attractable computational time for very large sparse matrices.

\subsection{Scalability of SC\_RB when varying the number of data samples $N$}

\textbf{Setup.} Our goal in this experiment is to assess the scalability of SC\_RB when varying the number of data samples $N$ on poker dataset and another large dataset SUSY \footnote{SUSY is a large dataset in the LIBSVM data collections \cite{chang2011libsvm}.}. 
We vary the number of data samples in the range of $N = [100 \ \ 1,000,000]$ on poker and $N = [4000 \ \ 4,000,000]$ on the synthetic dataset.
We use the same hyperparameters as the previous experiments and fix $R = 256$. Since RB can be easily parallelized, we accelerate its computation using 4 threads. Matlab also automatically parallelize the matrix-vector operations for other solvers. We report the runtime for generating random binning feature matrix, computing partial eigendecomposition using state-of-the-art eigensolver \cite{stathopoulos2010primme,wu2017primme_svds}, performing K-means, and the overall runtime, respectively.

\textbf{Results.} Figure \ref{fig:scalability_varyingN} clearly shows that SC\_RB indeed scales linearly with the increase in the number of data samples.
Note that even for large datasets consisting of millions of samples, the computation time of SC\_RB is still less than 500 seconds. These results suggest that: (i) SC\_RB exhibits linear scalability in $N$ and is comtenant of handling large datasets in a reasonable time frame; (ii) with the state-of-the-art eigensolver \cite{wu2017primme_svds}, the complexity of computing a few of eigenvectors for spectral clustering is indeed linearly proportional to the matrix size $N$. 

The key factor that contributes to the competitive computation time and linear scalability of  SC\_RB in the data size $N$ is that we take into account the end-to-end spectral clustering pipeline consisting of the build blocks, RB generation, eigensolver and K-means, and our approach ensures each component takes a similar (linear) computation complexity, as analyzed in Section \ref{sec: scalable sc_rb}.

\subsection{Scalability of SC\_RB when varying the number of RB features $R$}

\textbf{Setup.} We further investigate the scalablity of various random-feature-based, sampling-based SC methods and approximate Kernel K-means methods when varying the latent feature size $R$. One of our goal is to investigate whether the latent matrix rank $R$ is linearly proportional to $N$. If this is true, then the total complexity of an approximation method is still bounded by $O(N^2)$, which is an unfavorable property for large-scale data. Therefore, we study the computational time of 8 baselines when varying $R$ from 16 to 1024 on 4 datasets. The other settings are same as before. 

\textbf{Results.} Figure \ref{fig:scalability_varyingR} shows that the computation of SC\_RB is as efficient as other approximation methods. There are several comments worth making here. First, compared to the quadratic complexity of exact SC in Figures \ref{fig:scalability_varyingR_pendigits} and \ref{fig:scalability_varyingR_letter}, various approximation methods except KK\_RF require much less computational time. It means that the total complexity of various methods are respecting to $O(NR)$ where $R$ could be somehow treated as a constant as long as $R$ is significantly smaller than $N$. Remarkably, Figures \ref{fig:scalability_varyingR_acoustic} and \ref{fig:scalability_varyingR_ijcnn1} show that most of approximation methods including SC\_RB can even behave as efficient as K-means on the original dataset. Obviously, most of these methods exhibit clear linear relation with $R$, indicating that these methods are not tightly associated with $N$. In other words, if the low rank $R$ in any method is respecting with $N$, then the total complexity of the method is proportional to $O(N^2)$, which should yield non-linear scalability respecting to $R$. The only exception is the KK\_RF method, which consistently requires much more runtime compared to other methods, making it less attractable than other methods.

\section{Conclusion}
In this paper, we have presented a scalable end-to-end spectral clustering method based on RB features (SC\_RB) for overcoming two computational bottlenecks - similarity graph construction and eigendecomposition of the graph Laplacian. By leveraging RB features, the pairwise similarity matrix can be approximated implicitly by the inner product of the RB feature matrix, which significantly reduces the computational complexity from quadratic to linear in terms of the number of data samples. We further show how to effectively and directly apply SVD  on the weighted RB feature matrix and introduce a state-of-the-art sparse SVD solver to efficiently manage the SVD computation for a very large sparse matrix. Our theoretical analysis shows that by drawing $R$ grids with at least $\kappa$ number of non-empty bins per grid, SC\_RB can guarantee convergence to exact spectral clustering with a rate of $O(1/(\kappa R))$ under the same pairwise graph construction process, which is much faster than other Random Features based SC methods.
Our extensive experiments on 8 benchmarks over 4 performance metrics demonstrate that SC\_RB either outperforms or matches 8 baselines in both accuracy and computational time, and corroborate that SC\_RB indeed exhibits linear scalability in terms of the number of data samples and the number of RB features.



\bibliographystyle{ACM-Reference-Format}
\bibliography{myrefs,RWS,SC_RB}


\begin{thebibliography}{38}


\ifx \showCODEN    \undefined \def \showCODEN     #1{\unskip}     \fi
\ifx \showDOI      \undefined \def \showDOI       #1{#1}\fi
\ifx \showISBNx    \undefined \def \showISBNx     #1{\unskip}     \fi
\ifx \showISBNxiii \undefined \def \showISBNxiii  #1{\unskip}     \fi
\ifx \showISSN     \undefined \def \showISSN      #1{\unskip}     \fi
\ifx \showLCCN     \undefined \def \showLCCN      #1{\unskip}     \fi
\ifx \shownote     \undefined \def \shownote      #1{#1}          \fi
\ifx \showarticletitle \undefined \def \showarticletitle #1{#1}   \fi
\ifx \showURL      \undefined \def \showURL       {\relax}        \fi
\providecommand\bibfield[2]{#2}
\providecommand\bibinfo[2]{#2}
\providecommand\natexlab[1]{#1}
\providecommand\showeprint[2][]{arXiv:#2}

\bibitem[\protect\citeauthoryear{Bojchevski, Matkovic, and
  G{\"u}nnemann}{Bojchevski et~al\mbox{.}}{2017}]%
        {bojchevski2017robust}
\bibfield{author}{\bibinfo{person}{Aleksandar Bojchevski},
  \bibinfo{person}{Yves Matkovic}, {and} \bibinfo{person}{Stephan
  G{\"u}nnemann}.} \bibinfo{year}{2017}\natexlab{}.
\newblock \showarticletitle{Robust Spectral Clustering for Noisy Data: Modeling
  Sparse Corruptions Improves Latent Embeddings}. In
  \bibinfo{booktitle}{\emph{ACM SIGKDD International Conference on Knowledge
  Discovery and Data Mining}}. \bibinfo{pages}{737--746}.
\newblock


\bibitem[\protect\citeauthoryear{Chang and Lin}{Chang and Lin}{2011}]%
        {chang2011libsvm}
\bibfield{author}{\bibinfo{person}{Chih-Chung Chang} {and}
  \bibinfo{person}{Chih-Jen Lin}.} \bibinfo{year}{2011}\natexlab{}.
\newblock \showarticletitle{LIBSVM: a library for support vector machines}.
\newblock \bibinfo{journal}{\emph{ACM transactions on intelligent systems and
  technology (TIST)}} \bibinfo{volume}{2}, \bibinfo{number}{3}
  (\bibinfo{year}{2011}), \bibinfo{pages}{27}.
\newblock


\bibitem[\protect\citeauthoryear{Chen, Gao, Liu, Chen, and Ma}{Chen
  et~al\mbox{.}}{2006}]%
        {chen2006fast}
\bibfield{author}{\bibinfo{person}{Bo Chen}, \bibinfo{person}{Bin Gao},
  \bibinfo{person}{Tie-Yan Liu}, \bibinfo{person}{Yu-Fu Chen}, {and}
  \bibinfo{person}{Wei-Ying Ma}.} \bibinfo{year}{2006}\natexlab{}.
\newblock \showarticletitle{Fast spectral clustering of data using sequential
  matrix compression}. In \bibinfo{booktitle}{\emph{European Conference on
  Machine Learning}}. Springer, \bibinfo{pages}{590--597}.
\newblock


\bibitem[\protect\citeauthoryear{Chen, Wu, Audhkhasi, Kingsbury, and
  Ramabhadrari}{Chen et~al\mbox{.}}{2016}]%
        {chen2016efficient}
\bibfield{author}{\bibinfo{person}{Jie Chen}, \bibinfo{person}{Lingfei Wu},
  \bibinfo{person}{Kartik Audhkhasi}, \bibinfo{person}{Brian Kingsbury}, {and}
  \bibinfo{person}{Bhuvana Ramabhadrari}.} \bibinfo{year}{2016}\natexlab{}.
\newblock \showarticletitle{Efficient one-vs-one kernel ridge regression for
  speech recognition}. In \bibinfo{booktitle}{\emph{Acoustics, Speech and
  Signal Processing (ICASSP), 2016 IEEE International Conference on}}. IEEE,
  \bibinfo{pages}{2454--2458}.
\newblock


\bibitem[\protect\citeauthoryear{Chen and Hero}{Chen and Hero}{2016}]%
        {chen2016phase}
\bibfield{author}{\bibinfo{person}{Pin-Yu Chen} {and} \bibinfo{person}{Alfred~O
  Hero}.} \bibinfo{year}{2016}\natexlab{}.
\newblock \showarticletitle{Phase transitions and a model order selection
  criterion for spectral graph clustering}.
\newblock \bibinfo{journal}{\emph{arXiv preprint arXiv:1604.03159}}
  (\bibinfo{year}{2016}).
\newblock


\bibitem[\protect\citeauthoryear{Chen and Wu}{Chen and Wu}{2017}]%
        {chen2017revisiting}
\bibfield{author}{\bibinfo{person}{Pin-Yu Chen} {and} \bibinfo{person}{Lingfei
  Wu}.} \bibinfo{year}{2017}\natexlab{}.
\newblock \showarticletitle{Revisiting Spectral Graph Clustering with
  Generative Community Models}. In \bibinfo{booktitle}{\emph{Data Mining
  (ICDM), 2017 IEEE International Conference on}}. IEEE,
  \bibinfo{pages}{51--60}.
\newblock


\bibitem[\protect\citeauthoryear{Chen, Zhang, and Al~Hasan}{Chen
  et~al\mbox{.}}{2018}]%
        {chen2018incremental}
\bibfield{author}{\bibinfo{person}{Pin-Yu Chen}, \bibinfo{person}{Baichuan
  Zhang}, {and} \bibinfo{person}{Mohammad Al~Hasan}.}
  \bibinfo{year}{2018}\natexlab{}.
\newblock \showarticletitle{Incremental eigenpair computation for graph
  Laplacian matrices: theory and applications}.
\newblock \bibinfo{journal}{\emph{Social Network Analysis and Mining}}
  \bibinfo{volume}{8}, \bibinfo{number}{1} (\bibinfo{year}{2018}),
  \bibinfo{pages}{4}.
\newblock


\bibitem[\protect\citeauthoryear{Chen, Song, Bai, Lin, and Chang}{Chen
  et~al\mbox{.}}{2011}]%
        {chen2011parallel}
\bibfield{author}{\bibinfo{person}{Wen-Yen Chen}, \bibinfo{person}{Yangqiu
  Song}, \bibinfo{person}{Hongjie Bai}, \bibinfo{person}{Chih-Jen Lin}, {and}
  \bibinfo{person}{Edward~Y Chang}.} \bibinfo{year}{2011}\natexlab{}.
\newblock \showarticletitle{Parallel spectral clustering in distributed
  systems}.
\newblock \bibinfo{journal}{\emph{IEEE transactions on pattern analysis and
  machine intelligence}} \bibinfo{volume}{33}, \bibinfo{number}{3}
  (\bibinfo{year}{2011}), \bibinfo{pages}{568--586}.
\newblock


\bibitem[\protect\citeauthoryear{Chen and Cai}{Chen and Cai}{2011}]%
        {chen2011large}
\bibfield{author}{\bibinfo{person}{Xinlei Chen} {and} \bibinfo{person}{Deng
  Cai}.} \bibinfo{year}{2011}\natexlab{}.
\newblock \showarticletitle{Large Scale Spectral Clustering with Landmark-Based
  Representation.}. In \bibinfo{booktitle}{\emph{AAAI}},
  Vol.~\bibinfo{volume}{5}. \bibinfo{pages}{14}.
\newblock


\bibitem[\protect\citeauthoryear{Chitta, Jin, Havens, and Jain}{Chitta
  et~al\mbox{.}}{2011}]%
        {chitta2011approximate}
\bibfield{author}{\bibinfo{person}{Radha Chitta}, \bibinfo{person}{Rong Jin},
  \bibinfo{person}{Timothy~C Havens}, {and} \bibinfo{person}{Anil~K Jain}.}
  \bibinfo{year}{2011}\natexlab{}.
\newblock \showarticletitle{Approximate kernel k-means: Solution to large scale
  kernel clustering}. In \bibinfo{booktitle}{\emph{Proceedings of the 17th ACM
  SIGKDD international conference on Knowledge discovery and data mining}}.
  ACM, \bibinfo{pages}{895--903}.
\newblock


\bibitem[\protect\citeauthoryear{Chitta, Jin, and Jain}{Chitta
  et~al\mbox{.}}{2012}]%
        {chitta2012efficient}
\bibfield{author}{\bibinfo{person}{Radha Chitta}, \bibinfo{person}{Rong Jin},
  {and} \bibinfo{person}{Anil~K Jain}.} \bibinfo{year}{2012}\natexlab{}.
\newblock \showarticletitle{Efficient kernel clustering using random fourier
  features}. In \bibinfo{booktitle}{\emph{Data Mining (ICDM), 2012 IEEE 12th
  International Conference on}}. IEEE, \bibinfo{pages}{161--170}.
\newblock


\bibitem[\protect\citeauthoryear{Dhillon, Guan, and Kulis}{Dhillon
  et~al\mbox{.}}{2004}]%
        {dhillon2004kernel}
\bibfield{author}{\bibinfo{person}{Inderjit~S Dhillon},
  \bibinfo{person}{Yuqiang Guan}, {and} \bibinfo{person}{Brian Kulis}.}
  \bibinfo{year}{2004}\natexlab{}.
\newblock \showarticletitle{Kernel k-means: spectral clustering and normalized
  cuts}. In \bibinfo{booktitle}{\emph{Proceedings of the tenth ACM SIGKDD
  international conference on Knowledge discovery and data mining}}. ACM,
  \bibinfo{pages}{551--556}.
\newblock


\bibitem[\protect\citeauthoryear{Fowlkes, Belongie, Chung, and Malik}{Fowlkes
  et~al\mbox{.}}{2004}]%
        {fowlkes2004spectral}
\bibfield{author}{\bibinfo{person}{Charless Fowlkes}, \bibinfo{person}{Serge
  Belongie}, \bibinfo{person}{Fan Chung}, {and} \bibinfo{person}{Jitendra
  Malik}.} \bibinfo{year}{2004}\natexlab{}.
\newblock \showarticletitle{Spectral grouping using the Nystr{\"o}m method}.
\newblock \bibinfo{journal}{\emph{IEEE transactions on pattern analysis and
  machine intelligence}} \bibinfo{volume}{26}, \bibinfo{number}{2}
  (\bibinfo{year}{2004}), \bibinfo{pages}{214--225}.
\newblock


\bibitem[\protect\citeauthoryear{Gittens, Kambadur, and Boutsidis}{Gittens
  et~al\mbox{.}}{2013}]%
        {gittens2013approximate}
\bibfield{author}{\bibinfo{person}{Alex Gittens}, \bibinfo{person}{Prabhanjan
  Kambadur}, {and} \bibinfo{person}{Christos Boutsidis}.}
  \bibinfo{year}{2013}\natexlab{}.
\newblock \showarticletitle{Approximate spectral clustering via randomized
  sketching}.
\newblock \bibinfo{journal}{\emph{Ebay/IBM Research Technical Report}}
  (\bibinfo{year}{2013}).
\newblock


\bibitem[\protect\citeauthoryear{Hartigan and Wong}{Hartigan and Wong}{1979}]%
        {hartigan1979algorithm}
\bibfield{author}{\bibinfo{person}{John~A Hartigan} {and}
  \bibinfo{person}{Manchek~A Wong}.} \bibinfo{year}{1979}\natexlab{}.
\newblock \showarticletitle{Algorithm AS 136: A k-means clustering algorithm}.
\newblock \bibinfo{journal}{\emph{Journal of the Royal Statistical Society.
  Series C (Applied Statistics)}} \bibinfo{volume}{28}, \bibinfo{number}{1}
  (\bibinfo{year}{1979}), \bibinfo{pages}{100--108}.
\newblock


\bibitem[\protect\citeauthoryear{Li, Huang, and Liu}{Li et~al\mbox{.}}{2016}]%
        {li2016scalable}
\bibfield{author}{\bibinfo{person}{Yeqing Li}, \bibinfo{person}{Junzhou Huang},
  {and} \bibinfo{person}{Wei Liu}.} \bibinfo{year}{2016}\natexlab{}.
\newblock \showarticletitle{Scalable Sequential Spectral Clustering.}. In
  \bibinfo{booktitle}{\emph{AAAI}}. \bibinfo{pages}{1809--1815}.
\newblock


\bibitem[\protect\citeauthoryear{Lin and Cohen}{Lin and Cohen}{2010}]%
        {lin2010power}
\bibfield{author}{\bibinfo{person}{Frank Lin} {and} \bibinfo{person}{William~W
  Cohen}.} \bibinfo{year}{2010}\natexlab{}.
\newblock \showarticletitle{Power iteration clustering}. In
  \bibinfo{booktitle}{\emph{Proceedings of the 27th international conference on
  machine learning (ICML-10)}}. \bibinfo{pages}{655--662}.
\newblock


\bibitem[\protect\citeauthoryear{Liu, Wang, Danilevsky, and Han}{Liu
  et~al\mbox{.}}{2013}]%
        {liu2013large}
\bibfield{author}{\bibinfo{person}{Jialu Liu}, \bibinfo{person}{Chi Wang},
  \bibinfo{person}{Marina Danilevsky}, {and} \bibinfo{person}{Jiawei Han}.}
  \bibinfo{year}{2013}\natexlab{}.
\newblock \showarticletitle{Large-scale spectral clustering on graphs}. In
  \bibinfo{booktitle}{\emph{Proceedings of the Twenty-Third international joint
  conference on Artificial Intelligence}}. AAAI Press,
  \bibinfo{pages}{1486--1492}.
\newblock


\bibitem[\protect\citeauthoryear{Liu, Yang, Zheng, Qin, and Ma}{Liu
  et~al\mbox{.}}{2007}]%
        {liu2007fast}
\bibfield{author}{\bibinfo{person}{Tie-Yan Liu}, \bibinfo{person}{Huai-Yuan
  Yang}, \bibinfo{person}{Xin Zheng}, \bibinfo{person}{Tao Qin}, {and}
  \bibinfo{person}{Wei-Ying Ma}.} \bibinfo{year}{2007}\natexlab{}.
\newblock \showarticletitle{Fast large-scale spectral clustering by sequential
  shrinkage optimization}.
\newblock \bibinfo{journal}{\emph{Advances in Information Retrieval}}
  (\bibinfo{year}{2007}), \bibinfo{pages}{319--330}.
\newblock


\bibitem[\protect\citeauthoryear{Liu, He, and Chang}{Liu et~al\mbox{.}}{2010}]%
        {liu2010large}
\bibfield{author}{\bibinfo{person}{Wei Liu}, \bibinfo{person}{Junfeng He},
  {and} \bibinfo{person}{Shih-Fu Chang}.} \bibinfo{year}{2010}\natexlab{}.
\newblock \showarticletitle{Large graph construction for scalable
  semi-supervised learning}. In \bibinfo{booktitle}{\emph{Proceedings of the
  27th international conference on machine learning (ICML-10)}}.
  \bibinfo{pages}{679--686}.
\newblock


\bibitem[\protect\citeauthoryear{Ng, Jordan, and Weiss}{Ng
  et~al\mbox{.}}{2002}]%
        {ng2002spectral}
\bibfield{author}{\bibinfo{person}{Andrew~Y Ng}, \bibinfo{person}{Michael~I
  Jordan}, {and} \bibinfo{person}{Yair Weiss}.}
  \bibinfo{year}{2002}\natexlab{}.
\newblock \showarticletitle{On spectral clustering: Analysis and an algorithm}.
  In \bibinfo{booktitle}{\emph{Advances in neural information processing
  systems}}. \bibinfo{pages}{849--856}.
\newblock


\bibitem[\protect\citeauthoryear{Rahimi and Recht}{Rahimi and Recht}{2008}]%
        {rahimi2008random}
\bibfield{author}{\bibinfo{person}{Ali Rahimi} {and} \bibinfo{person}{Benjamin
  Recht}.} \bibinfo{year}{2008}\natexlab{}.
\newblock \showarticletitle{Random features for large-scale kernel machines}.
  In \bibinfo{booktitle}{\emph{Advances in neural information processing
  systems}}. \bibinfo{pages}{1177--1184}.
\newblock


\bibitem[\protect\citeauthoryear{Sakai and Imiya}{Sakai and Imiya}{2009}]%
        {sakai2009fast}
\bibfield{author}{\bibinfo{person}{Tomoya Sakai} {and} \bibinfo{person}{Atsushi
  Imiya}.} \bibinfo{year}{2009}\natexlab{}.
\newblock \showarticletitle{Fast Spectral Clustering with Random Projection and
  Sampling.}. In \bibinfo{booktitle}{\emph{MLDM}}. Springer,
  \bibinfo{pages}{372--384}.
\newblock


\bibitem[\protect\citeauthoryear{Shi and Malik}{Shi and Malik}{2000}]%
        {shi2000normalized}
\bibfield{author}{\bibinfo{person}{Jianbo Shi} {and} \bibinfo{person}{Jitendra
  Malik}.} \bibinfo{year}{2000}\natexlab{}.
\newblock \showarticletitle{Normalized cuts and image segmentation}.
\newblock \bibinfo{journal}{\emph{IEEE Transactions on pattern analysis and
  machine intelligence}} \bibinfo{volume}{22}, \bibinfo{number}{8}
  (\bibinfo{year}{2000}), \bibinfo{pages}{888--905}.
\newblock


\bibitem[\protect\citeauthoryear{Shinnou and Sasaki}{Shinnou and
  Sasaki}{2008}]%
        {shinnou2008spectral}
\bibfield{author}{\bibinfo{person}{Hiroyuki Shinnou} {and}
  \bibinfo{person}{Minoru Sasaki}.} \bibinfo{year}{2008}\natexlab{}.
\newblock \showarticletitle{Spectral Clustering for a Large Data Set by
  Reducing the Similarity Matrix Size.}. In \bibinfo{booktitle}{\emph{LREC}}.
\newblock


\bibitem[\protect\citeauthoryear{Stathopoulos and McCombs}{Stathopoulos and
  McCombs}{2010}]%
        {stathopoulos2010primme}
\bibfield{author}{\bibinfo{person}{Andreas Stathopoulos} {and}
  \bibinfo{person}{James~R McCombs}.} \bibinfo{year}{2010}\natexlab{}.
\newblock \showarticletitle{PRIMME: preconditioned iterative multimethod
  eigensolver—methods and software description}.
\newblock \bibinfo{journal}{\emph{ACM Transactions on Mathematical Software
  (TOMS)}} \bibinfo{volume}{37}, \bibinfo{number}{2} (\bibinfo{year}{2010}),
  \bibinfo{pages}{21}.
\newblock


\bibitem[\protect\citeauthoryear{Tremblay, Puy, Gribonval, and
  Vandergheynst}{Tremblay et~al\mbox{.}}{2016}]%
        {tremblay2016compressive}
\bibfield{author}{\bibinfo{person}{Nicolas Tremblay}, \bibinfo{person}{Gilles
  Puy}, \bibinfo{person}{R{\'e}mi Gribonval}, {and} \bibinfo{person}{Pierre
  Vandergheynst}.} \bibinfo{year}{2016}\natexlab{}.
\newblock \showarticletitle{Compressive spectral clustering}. In
  \bibinfo{booktitle}{\emph{International Conference on Machine Learning}}.
  \bibinfo{pages}{1002--1011}.
\newblock


\bibitem[\protect\citeauthoryear{Von~Luxburg}{Von~Luxburg}{2007}]%
        {von2007tutorial}
\bibfield{author}{\bibinfo{person}{Ulrike Von~Luxburg}.}
  \bibinfo{year}{2007}\natexlab{}.
\newblock \showarticletitle{A tutorial on spectral clustering}.
\newblock \bibinfo{journal}{\emph{Statistics and computing}}
  \bibinfo{volume}{17}, \bibinfo{number}{4} (\bibinfo{year}{2007}),
  \bibinfo{pages}{395--416}.
\newblock


\bibitem[\protect\citeauthoryear{Williams and Seeger}{Williams and
  Seeger}{2001}]%
        {williams2001using}
\bibfield{author}{\bibinfo{person}{Christopher~KI Williams} {and}
  \bibinfo{person}{Matthias Seeger}.} \bibinfo{year}{2001}\natexlab{}.
\newblock \showarticletitle{Using the Nystr{\"o}m method to speed up kernel
  machines}. In \bibinfo{booktitle}{\emph{Advances in neural information
  processing systems}}. \bibinfo{pages}{682--688}.
\newblock


\bibitem[\protect\citeauthoryear{Wu, Romero, and Stathopoulos}{Wu
  et~al\mbox{.}}{2017}]%
        {wu2017primme_svds}
\bibfield{author}{\bibinfo{person}{Lingfei Wu}, \bibinfo{person}{Eloy Romero},
  {and} \bibinfo{person}{Andreas Stathopoulos}.}
  \bibinfo{year}{2017}\natexlab{}.
\newblock \showarticletitle{PRIMME\_SVDS: A high-performance preconditioned SVD
  solver for accurate large-scale computations}.
\newblock \bibinfo{journal}{\emph{SIAM Journal on Scientific Computing}}
  \bibinfo{volume}{39}, \bibinfo{number}{5} (\bibinfo{year}{2017}),
  \bibinfo{pages}{S248--S271}.
\newblock


\bibitem[\protect\citeauthoryear{Wu and Stathopoulos}{Wu and
  Stathopoulos}{2015}]%
        {wu2015preconditioned}
\bibfield{author}{\bibinfo{person}{Lingfei Wu} {and} \bibinfo{person}{Andreas
  Stathopoulos}.} \bibinfo{year}{2015}\natexlab{}.
\newblock \showarticletitle{A preconditioned hybrid svd method for accurately
  computing singular triplets of large matrices}.
\newblock \bibinfo{journal}{\emph{SIAM Journal on Scientific Computing}}
  \bibinfo{volume}{37}, \bibinfo{number}{5} (\bibinfo{year}{2015}),
  \bibinfo{pages}{S365--S388}.
\newblock


\bibitem[\protect\citeauthoryear{Wu, Yen, Chen, and Yan}{Wu
  et~al\mbox{.}}{2016}]%
        {wu2016revisiting}
\bibfield{author}{\bibinfo{person}{Lingfei Wu}, \bibinfo{person}{Ian~EH Yen},
  \bibinfo{person}{Jie Chen}, {and} \bibinfo{person}{Rui Yan}.}
  \bibinfo{year}{2016}\natexlab{}.
\newblock \showarticletitle{Revisiting random binning features: Fast
  convergence and strong parallelizability}. In
  \bibinfo{booktitle}{\emph{Proceedings of the 22nd ACM SIGKDD International
  Conference on Knowledge Discovery and Data Mining}}. ACM,
  \bibinfo{pages}{1265--1274}.
\newblock


\bibitem[\protect\citeauthoryear{Wu, Yen, Xu, Ravikuma, and Witbrock}{Wu
  et~al\mbox{.}}{2018a}]%
        {wu2018d2ke}
\bibfield{author}{\bibinfo{person}{Lingfei Wu}, \bibinfo{person}{Ian En-Hsu
  Yen}, \bibinfo{person}{Fangli Xu}, \bibinfo{person}{Pradeep Ravikuma}, {and}
  \bibinfo{person}{Michael Witbrock}.} \bibinfo{year}{2018}\natexlab{a}.
\newblock \showarticletitle{D2KE: From Distance to Kernel and Embedding}.
\newblock \bibinfo{journal}{\emph{arXiv preprint arXiv:1802.04956}}
  (\bibinfo{year}{2018}).
\newblock


\bibitem[\protect\citeauthoryear{Wu, Yen, Yi, Xu, Lei, and Witbrock}{Wu
  et~al\mbox{.}}{2018b}]%
        {wu2018random}
\bibfield{author}{\bibinfo{person}{Lingfei Wu}, \bibinfo{person}{Ian En-Hsu
  Yen}, \bibinfo{person}{Jinfeng Yi}, \bibinfo{person}{Fangli Xu},
  \bibinfo{person}{Qi Lei}, {and} \bibinfo{person}{Michael Witbrock}.}
  \bibinfo{year}{2018}\natexlab{b}.
\newblock \showarticletitle{Random Warping Series: A Random Features Method for
  Time-Series Embedding}. In \bibinfo{booktitle}{\emph{International Conference
  on Artificial Intelligence and Statistics}}. \bibinfo{pages}{793--802}.
\newblock


\bibitem[\protect\citeauthoryear{Yan, Huang, and Jordan}{Yan
  et~al\mbox{.}}{2009}]%
        {yan2009fast}
\bibfield{author}{\bibinfo{person}{Donghui Yan}, \bibinfo{person}{Ling Huang},
  {and} \bibinfo{person}{Michael~I Jordan}.} \bibinfo{year}{2009}\natexlab{}.
\newblock \showarticletitle{Fast approximate spectral clustering}. In
  \bibinfo{booktitle}{\emph{Proceedings of the 15th ACM SIGKDD international
  conference on Knowledge discovery and data mining}}. ACM,
  \bibinfo{pages}{907--916}.
\newblock


\bibitem[\protect\citeauthoryear{Yang and Leskovec}{Yang and Leskovec}{2015}]%
        {yang2015defining}
\bibfield{author}{\bibinfo{person}{Jaewon Yang} {and} \bibinfo{person}{Jure
  Leskovec}.} \bibinfo{year}{2015}\natexlab{}.
\newblock \showarticletitle{Defining and evaluating network communities based
  on ground-truth}.
\newblock \bibinfo{journal}{\emph{Knowledge and Information Systems}}
  \bibinfo{volume}{42}, \bibinfo{number}{1} (\bibinfo{year}{2015}),
  \bibinfo{pages}{181--213}.
\newblock


\bibitem[\protect\citeauthoryear{Yen, Lin, Lin, Ravikumar, and Dhillon}{Yen
  et~al\mbox{.}}{2014}]%
        {yen2014sparse}
\bibfield{author}{\bibinfo{person}{Ian En-Hsu Yen}, \bibinfo{person}{Ting-Wei
  Lin}, \bibinfo{person}{Shou-De Lin}, \bibinfo{person}{Pradeep~K Ravikumar},
  {and} \bibinfo{person}{Inderjit~S Dhillon}.} \bibinfo{year}{2014}\natexlab{}.
\newblock \showarticletitle{Sparse random feature algorithm as coordinate
  descent in hilbert space}. In \bibinfo{booktitle}{\emph{Advances in Neural
  Information Processing Systems}}. \bibinfo{pages}{2456--2464}.
\newblock


\bibitem[\protect\citeauthoryear{Zaki, Meira~Jr, and Meira}{Zaki
  et~al\mbox{.}}{2014}]%
        {zaki2014data}
\bibfield{author}{\bibinfo{person}{Mohammed~J Zaki}, \bibinfo{person}{Wagner
  Meira~Jr}, {and} \bibinfo{person}{Wagner Meira}.}
  \bibinfo{year}{2014}\natexlab{}.
\newblock \bibinfo{booktitle}{\emph{Data mining and analysis: fundamental
  concepts and algorithms}}.
\newblock \bibinfo{publisher}{Cambridge University Press}.
\newblock


\end{thebibliography}
\end{document}